\documentclass{article}


\usepackage{amsmath,amsfonts,bm}









\def\eqref#1{equation~\ref{#1}}









\def\1{\bm{1}}










\DeclareMathAlphabet{\mathsfit}{\encodingdefault}{\sfdefault}{m}{sl}
\SetMathAlphabet{\mathsfit}{bold}{\encodingdefault}{\sfdefault}{bx}{n}













\usepackage{PRIMEarxiv}

\usepackage[utf8]{inputenc} 
\usepackage[T1]{fontenc}    
\usepackage{hyperref}       
\usepackage{url}            
\usepackage{booktabs}       
\usepackage{amsfonts}       
\usepackage{nicefrac}       
\usepackage{microtype}      
\usepackage{lipsum}
\usepackage{fancyhdr}       
\usepackage{graphicx}       
\graphicspath{{media/}}     

\pagestyle{fancy}
\thispagestyle{empty}
\rhead{ \textit{ }} 

\fancyhead[LO]{NeuMatC: A General Neural Framework For Fast
	Parametric Matrix Operation}

\usepackage{wrapfig}
\PassOptionsToPackage{dvipsnames,table}{xcolor}
\usepackage[dvipsnames]{xcolor}

 \usepackage{hyperref}
\usepackage{url}

\usepackage{mathrsfs}
\usepackage{algorithm,mathtools,booktabs,amsthm}
\newtheorem{theorem}{Theorem}
\newtheorem{definition}{Definition}
\newtheorem{remark}{Remark}
\newtheorem{lemma}{Lemma}

\usepackage[noend]{algpseudocode} 
\usepackage{graphicx} 
\usepackage{graphicx}    
\usepackage{amsmath}     
\usepackage{amssymb}     
\usepackage{caption}     
\usepackage{colortbl}    
\usepackage{wrapfig}     
\usepackage{enumitem}
\usepackage{multirow}
\usepackage{array}
\newcolumntype{M}[1]{>{\centering\arraybackslash}m{#1}}
\newcommand{\paramdomain}{\mathcal{P}} 
\title{NeuMatC: A General Neural Framework For Fast
	Parametric Matrix Operation}

\author{
	Chuan Wang$^{1}$ ~~
	Xi-le Zhao$^{2}$\thanks{Corresponding author.} ~~
	Zhilong Han$^{2}$ ~~
	Liang Li$^{2}$ ~~
	Deyu Meng$^{3}$ ~~
	Michael K. Ng$^{1}$ \\
	$^1$Department of Mathematics, Hong Kong Baptist University, Kowloon Tong, Hong Kong \\
	\texttt{chuanwang0518@163.com, michael-ng@hkbu.edu.hk} \\
	$^2$School of Mathematical Sciences, UESTC,  Chengdu, China \\
	\texttt{xlzhao122003@163.com, hzlmath@163.com, plum\_liliang@uestc.edu.cn} \\
	$^3$School of Mathematics and Statistics, Xi’an Jiaotong University, Xi’an, China \\
	\texttt{dymeng@mail.xjtu.edu.cn} 
}

\begin{document}
\maketitle

\begin{abstract}
 Matrix operations (e.g., inversion and singular value decomposition (SVD)) are fundamental in science and engineering. In many emerging real-world applications (such as wireless communication and signal processing), these operations must be performed repeatedly over matrices with parameters varying continuously.  However, conventional methods tackle each matrix operation independently, underexploring the inherent low-rankness  and continuity along the parameter dimension, resulting in significantly redundant computation.  To address this challenge, we propose \textbf{\textit{Neural Matrix Computation Framework} (NeuMatC)}, which elegantly tackles general parametric matrix operation tasks by leveraging  the underlying low-rankness and continuity along the parameter dimension. Specifically, NeuMatC unsupervisedly learns a low-rank and continuous mapping from parameters to their corresponding matrix operation results. Once trained, NeuMatC enables efficient computations at arbitrary parameters using only a few basic operations (e.g., matrix multiplications and nonlinear activations), significantly reducing redundant computations. Experimental results on both synthetic and real-world datasets demonstrate the promising performance of NeuMatC, exemplified by over $3\times$ speedup in parametric inversion and $10\times$ speedup in parametric SVD compared to the widely used NumPy baseline in wireless communication, while maintaining acceptable accuracy.
\end{abstract}

\section{Introduction}

Neural networks have emerged as a promising tool in scientific computing, enabling data-driven solutions to classical numerical problems \cite{kopanivcakova2025deeponet,karniadakis2021physics,bhattacharya2021model,indyk2019learning}. 
For example, recent works have demonstrated its effectiveness in solving partial differential equations by physics-informed neural networks \cite{PINN,kashinath2021physics,gao2023failure}, as well as in spectral problems such as eigenfunction approximation and high-dimensional eigenvalue computation \cite{han2020solving,zhang2021neural}. 
Additionally, reinforcement learning has been explored for discovering efficient matrix product schemes \cite{fawzi2022discovering}. 
These advances highlight the  capability of neural networks in tackling core numerical tasks traditionally dominated by analytical and numerical approaches.

A fundamental component in scientific computing is matrix operations, including inversion, eigenvalue decomposition, and singular value decomposition (SVD) \cite{ji2025rethinking,park2025low}.
Matrix operations have long attracted significant research interest, leading to the development of a variety of  methods.
Classical methods for matrix operations include direct solvers (e.g., Gaussian elimination with LU for matrix inversion and QR-based algorithms for SVD) and iterative methods including the conjugate gradient method and inverse approximations based on the Neumann series \cite{zhu2015matrix,hestenes1952methods}.  These methods are widely valued for their precision, stability, and strong theoretical guarantees \cite{golub1996matrix}.
More recently, randomized methods have received increasing attention for accelerating matrix operations \cite{halko2011finding,kaloorazi2020efficient,kaloorazi2018compressed}. By introducing random sampling to extract informative matrix components, these methods reduce the computational cost of operations while maintaining acceptable accuracy. However, despite their precision and well-established theoretical guarantees, classical numerical and randomized methods rely on step-by-step factorization or iterative refinement, which are inherently sequential and thus difficult to fully parallelize on modern platforms \cite{dongarra1991solving}.
In contrast, deep learning offers a promising alternative, leveraging the strengths of modern hardware to develop efficient methods for matrix operations.

\begin{figure}[t]
	\centering
	\includegraphics[width=0.95\linewidth]{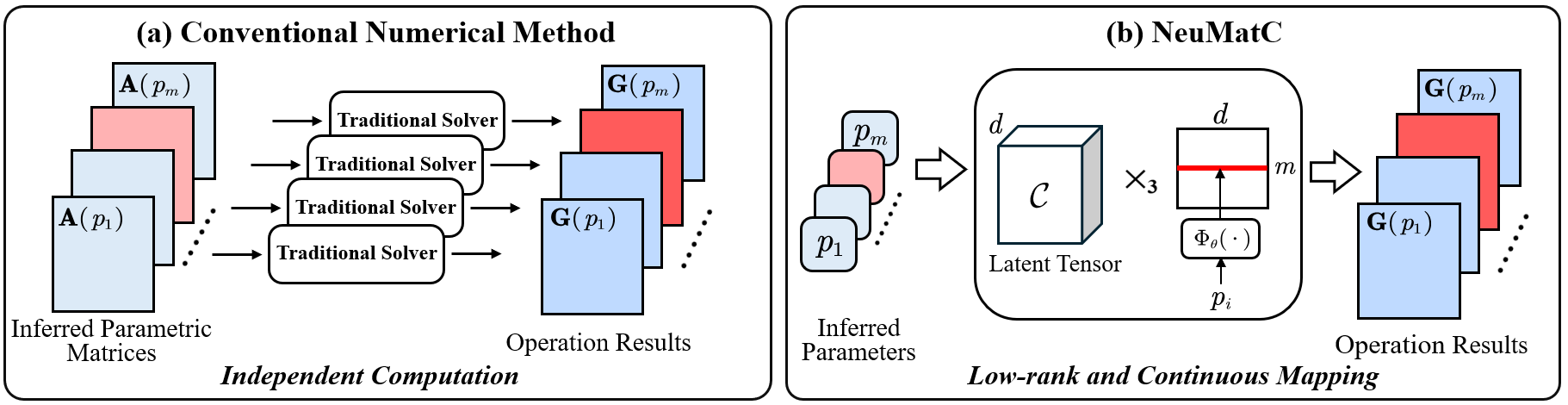}
	\caption{
		Comparison between conventional numerical methods and the proposed \textsc{NeuMatC}.
		Conventional  methods tackle each matrix operation independently, leading to  redundant computation. In contrast, \textsc{NeuMatC} learns a low-rank and continuous mapping from parameters to the corresponding operation results using only a few basic operations (i.e., matrix multiplications and nonlinear activations). \textit{Validated on real wireless communication scenarios, \textsc{NeuMatC} achieves over $3\times$ speedup in parametric inversion and $10\times$ speedup in parametric SVD compared to the widely used NumPy baseline, while maintaining acceptable accuracy (see Section~\ref{sec:experiments}).}	}

	\label{fig:NeuMatC-illustration}
\end{figure}

While matrix operations  play a foundational role in scientific computing and engineering applications, the potential of neural networks  for these tasks remains largely underexplored.
Among existing works, one research direction integrates neural networks into classical iterative solvers to improve numerical performance\cite{almasadeh2022enhanced,feng2001cross,li2022efficient}.  For example, Li and Hu proposed a second-order neural network framework that incorporates Newton-type updates \cite{li2022efficient}. Feng \textit{et al.} proposed a recurrent cross-associative neural network for computing the matrix SVD \cite{feng2001cross}.  However, these methods provide limited speedups compared to highly optimized numerical libraries such as LAPACK \cite{anderson1999lapack}.  Another line of research focuses on end-to-end models that directly regress the result of matrix operations from input data. For example, Ji \textit{et al.} proposed a fully connected network that directly maps flattened matrices to their inverses \cite{ji2025rethinking}. However, these methods suffer from scalability issues: as the input matrix grows in size, the corresponding network becomes increasingly large and leads to slower inference speed.

In emerging real-world applications (e.g., wireless communication, signal processing, and control systems), matrix operations are not applied to a single matrix, but on a family of matrices parameterized by  $p$, where $p$ (e.g., frequency, location, or time) varies continuously \cite{wu2025mimo,brandt2011wideband,wilber2022data,park2025low,zahm2016interpolation}. These parametric computation scenarios further expose a key shortcoming of conventional methods for matrix operations: existing methods \textit{typically tackle each matrix independently, ignoring the continuity and the low-rank structure along the parameter dimension, which leads to significant computational redundancy.} This naturally raises an important question:

\begin{center}
	{\it {How to efficiently perform matrix operations over parametric matrices by leveraging the inherent continuity and low-rankness?}}
\end{center}

To address this challenge, we propose  \textit{\textbf{Neural Matrix Computation Framework (NeuMatC)}}, which tackles general parametric matrix operations by leveraging the underlying low-rankness and continuity along the parameter dimension. Specifically, NeuMatC  unsupervisedly learns a low-rank and continuous mapping from parameters to their corresponding operation results. Once trained, NeuMatC enables efficient computations at arbitrary parameters using only a few basic operations (e.g., matrix multiplications and nonlinear activations), significantly reducing redundant computations required by conventional methods.   As illustrated in Figure~\ref{fig:NeuMatC-illustration}, rather than computing matrices independently at each parameter, NeuMatC learns a low-rank and continuous  mapping that directly maps parameters to the corresponding matrix operation results.   Experimental results on both synthetic and real-world datasets demonstrate the promising performance of NeuMatC, exemplified by over $3\times$ speedup in parametric inversion and $10\times$ speedup in parametric SVD compared to the widely used NumPy baseline in wireless communication, while maintaining acceptable accuracy.

The remainder of this paper is organized as follows:  Section \ref{sec:NeuMatC_methodology} details the NeuMatC framework. Section \ref{sec:experiments} presents experimental results. In Section~\ref{sec:discussion}, we discuss key aspects of the proposed NeuMatC.  Finally, Section \ref{sec:conclusion} concludes the paper.

\subsection{Related Work}

Despite the importance of parametric matrix computation in emerging applications, this area has received relatively limited attention. 
A representative class is based on \textit{\textbf{Zhang Neural Networks (ZNN)}}~\cite{chen2020online,zhang2011zhang,guo2017novel,zhang2008zhang,jin2022robust}, which formulate matrix operations as differential equations whose equilibria correspond to the operation results, and solve them sequentially along the parameter dimension.
However, ZNN processes each parameter point sequentially, which is inefficient for many emerging real-world applications (e.g., channel precoding and detection) that require fast computation across a large number of sampled parameter values. Moreover, each step of ZNN entails a relatively high computational cost due to integration. 
Several methods aim to \textit{\textbf{accelerate the solution of parametric linear systems}} ~\cite{correnty2024preconditioned,kopanivcakova2025deeponet,jarlebring2022infinite}. Traditional solvers such as inexact infinite GMRES~\cite{correnty2024preconditioned} offer efficient Krylov-based strategies but require iterative processing at each parameter point. Recent hybrid methods, such as DeepONet preconditioners~\cite{kopanivcakova2025deeponet}, combine operator learning with Krylov solvers. However, these methods are  operation-specific and rely on solver-in-the-loop inference.
In contrast, NeuMatC is a general neural framework for fast parametric matrix computation that can handle a broad class of matrix operations (including inversion, eigenvalue decomposition, and SVD).
By learning a low-rank and continuous mapping, NeuMatC  can efficiently compute the operation results at arbitrary parameters using only a few basic operations (e.g., matrix multiplications and nonlinear activations).

\section{Neural Matrix Computation  Framework (NeuMatC)}

\label{sec:NeuMatC_methodology}
In this section, we present the NeuMatC framework for fast parametric matrix operations. At the core of NeuMatC is \textbf{\textit{the efficient mapping}} from  parameters to matrix operation results, which fully leverages the inherent low-rankness and continuity along the parameter dimension. To \textbf{\textit{effectively learn this mapping}}, we design an algebraic structure-aware loss function, along with a failure-informed adaptive sampling strategy.

\paragraph{Notations.} Throughout this work, we adopt standard notation: lowercase letters $a \in \mathbb{R}$ denote scalars, bold lowercase  letters $\mathbf{a} \in \mathbb{R}^{n_1}$ denote vectors, bold uppercase letters  $  \mathbf{A}\in \mathbb{R}^{n_1 \times n_2}$ denote matrices, and calligraphic letters $  \mathcal{A}\in \mathbb{R}^{n_1\times n_2 \times n_3}$ denote third-order tensors. We consider a continuous parametric matrix $\mathbf{A}(p) \in \mathbb{R}^{n_1 \times n_2}$, where $p \in  \mathbb{P} $  and  $\mathbb{P} \subseteq \mathbb{R}$ is the parameter domain. Applying a matrix operation to $\mathbf{A}(p)$ gives an output of the form $\mathbf{G}(p) = \{ \mathbf{G}_1(p), \cdots, \mathbf{G}_m(p) \}$, where each $\mathbf{G}_i(p) \in \mathbb{R}^{n_{1i} \times n_{2i}}$ denotes a component of the result. For example, matrix inversion corresponds to $\mathbf{G}(p) = \mathbf{A}(p)^{-1}$ and SVD corresponds to $\mathbf{G}(p) = \{ \mathbf{U}(p), \mathbf{S}(p), \mathbf{V}(p) \}$ such that $\mathbf{A}(p) = \mathbf{U}(p) \mathbf{S}(p) \mathbf{V}(p)^\top$. More definitions are provided in Appendix~\ref{appendix:notatins}. 

\begin{figure}[b]
	\centering
	\vspace{-0.5cm}
	\includegraphics[width=\linewidth]{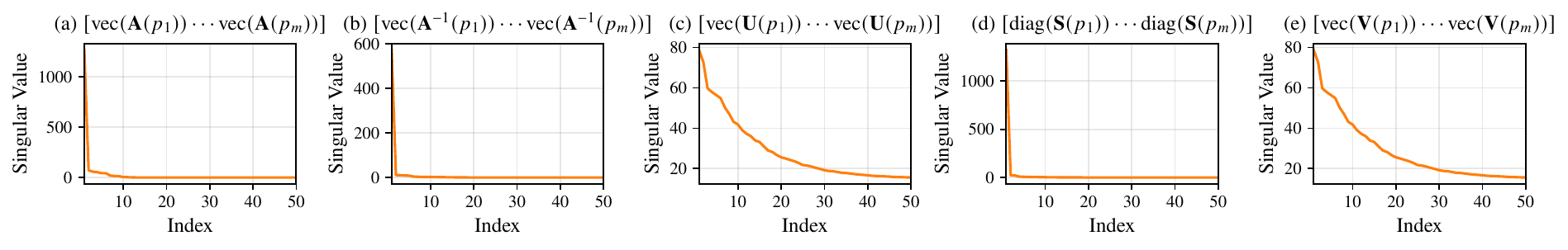}
	\vspace{-0.4cm}
    \captionsetup{singlelinecheck=false,justification=justified}
    \caption{Singular value decay of matrices constructed by stacking vectorized forms of $\mathbf{A}(p)$, $\mathbf{A}^{-1}(p)$, and their SVD components $\mathbf{U}(p)$, $\mathbf{S}(p)$, and $\mathbf{V}(p)$ across $m$ parameter instances. The rapid decay of singular values indicates low-rank structure along the parameter dimension. }
	\vspace{-0.4cm}
	\label{fig:low_rank}
\end{figure}
\begin{figure}[t]
	\centering
	
	\vspace{-0.5cm}
	\includegraphics[width=\linewidth]{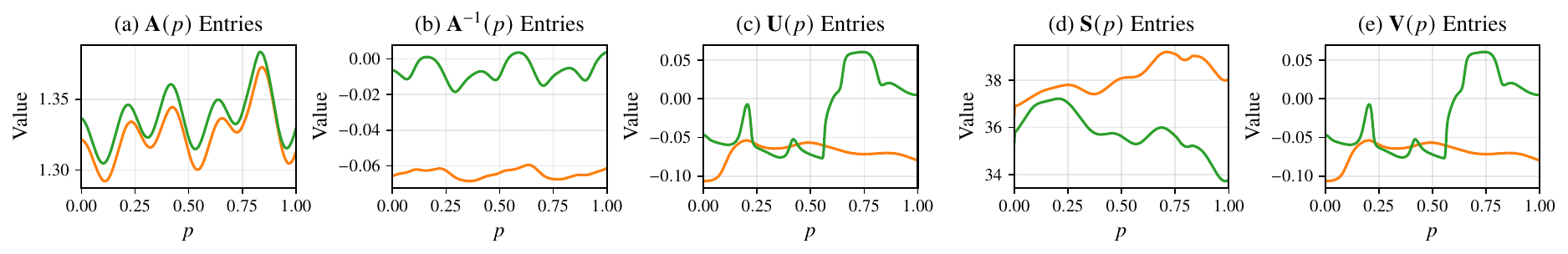}
	\vspace{-0.7cm}
     \caption{
       Representative sampled entries of a real-world channel matrix $\mathbf{A}(p)$, its inverse $\mathbf{A}(p)^{-1}$, and its SVD outputs (singular values $\mathbf{S}(p)$ and left and right singular vectors $\mathbf{U}(p)$ and $\mathbf{V}(p)$) along the parameter dimension $p$, demonstrating the continuity of these matrices. The green and orange curves correspond to two different sampled entries
    }
	
	\vspace{-0.4cm}
	\label{fig:continuity}
\end{figure}

\subsection{ Low-Rank and Continuous Mapping }

In NeuMatC, we aim to learn an efficient mapping from parameters to the corresponding matrix operation results. However, directly learning such a mapping can be computationally inefficient and generalize poorly. To overcome this limitation, NeuMatC leverages two general properties commonly observed in real-world parametric matrix operation problems: \textit{\textbf{low-rankness}} and \textit{\textbf{continuity}} of the operation results along the parameter dimension.  
Empirical evidence indicates that the results of many matrix operations in real-world scenarios often lie in low-dimensional subspaces along the parameter dimension, as demonstrated by the rapid decay of singular values (see Fig.~\ref{fig:low_rank}).  
Moreover, for a continuous parametric matrix $\mathbf{A}(p)$, many standard matrix operations (e.g., inversion and SVD) preserve continuity, i.e., the corresponding operation results also vary continuously with the parameter $p$ (see Fig.~\ref{fig:continuity}). This continuity property can be theoretically established under mild conditions (see Appendix~\ref{appendix:continuity}).

The low-rank structure along the parameter dimension observed in parametric matrix operation results 
$\mathbf{G}(p) = \{ \mathbf{G}_1(p), \cdots, \mathbf{G}_m(p) \}$ 
motivates the following compact mapping for each component. Specifically, we define 
$\widetilde{\mathbf{G}}_i : \mathbb{R} \to \mathbb{R}^{n_{1i} \times n_{2i}}$,  which maps a parameter value $p$ to the corresponding operation result component $\mathbf{G}_i(p)$:
\begin{equation}\label{mapping}
	\widetilde{\mathbf{G}}_i(\cdot): = \mathcal{C}_i \times_3 \Phi_i(\cdot),
\end{equation}
where $\mathcal{C}_i \in \mathbb{R}^{n_{1i} \times n_{2i} \times d_i}$ is a latent tensor and 
$\Phi : \mathbb{R} \to \mathbb{R}^{d_i}$ is a vector-valued function.   $d_i$ denotes the latent dimension and $\times_3$ denotes the mode-3 matrix–tensor product (see Appendix~\ref{appendix:notatins} for details). {\color{black} A detailed discussion of the low-rank structure particularly for the case of matrix inversion is provided in Appendix~\ref{appendix:lowrank_inversion}.}

The following theorem provides a theoretical justification for this compact mapping design, showing that under mild conditions, the operation result component $\mathbf{G}_i(p)$ at arbitrary parameter values $p$ can be obtained by the mapping 
$\widetilde{\mathbf{G}}_i(\cdot) = \mathcal{C}_i \times_3 \Phi_i(\cdot)$; 
the proof is provided in Appendix~\ref{appendix:theory}.
\begin{theorem}[Existence of the Compact Mapping]\label{CMapping}
	Let 
	$\mathbf{G}(p) = \{ \mathbf{G}_1(p), \cdots, \mathbf{G}_m(p) \}$ be a parametric matrix operation result, 
	where each $\mathbf{G}_i(p) \in \mathbb{R}^{n_{1i} \times n_{2i}}$ for $i = 1, \cdots, m$.  
	Assume that for each $i$, the set 
	$\{ [\mathbf{G}_i(p)]_{jk} \mid j = 1, \cdots, n_{1i},\; k = 1, \cdots, n_{2i} \}$ 
	spans a $d_i$-dimensional linear subspace, where $ [\mathbf{G}_i(p)]_{jk}$ denotes the $(j,k)$-th entry of $\mathbf{G}_i(p)$.
	Then, there exists a latent tensor $\mathcal{C}_i \in \mathbb{R}^{n_{1i} \times n_{2i} \times d_i}$ 
	and a vector-valued function $\Phi_i : \mathbb{R} \to \mathbb{R}^{d_i}$ such that the following mapping
	\[
	\widetilde{\mathbf{G}}_i(\cdot) := \mathcal{C}_i \times_3 \Phi_i(\cdot)
	\]
	satisfies $\widetilde{\mathbf{G}}_i(p) = \mathbf{G}_i(p)$ for all $p \in \mathbb{P}$.
\end{theorem}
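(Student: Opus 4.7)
The plan is to treat this as essentially a basis-expansion statement: the hypothesis that the $n_{1i} n_{2i}$ entry functions $[\mathbf{G}_i(\cdot)]_{jk}$ span a $d_i$-dimensional subspace of the function space $\mathbb{R}^{\mathbb{P}}$ is precisely what is needed to write every entry as a finite linear combination of $d_i$ basis functions, and then to repackage those combinations into the mode-3 product form. The proof works one index $i$ at a time, so I would fix $i$ throughout and write $\mathbf{G}$, $\mathcal{C}$, $\Phi$, $d$, $n_1$, $n_2$ for the $i$-indexed objects.

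First, I would view each scalar map $p \mapsto [\mathbf{G}(p)]_{jk}$ as a vector in the real vector space of functions $\mathbb{P}\to\mathbb{R}$. By assumption, the finite collection $\{[\mathbf{G}(\cdot)]_{jk} : 1\le j\le n_1,\;1\le k\le n_2\}$ spans a subspace $V$ of dimension $d$. I would then choose any basis $\phi_1,\ldots,\phi_d$ of $V$ and assemble it into the vector-valued function
\[
\Phi(p) := \bigl(\phi_1(p),\ldots,\phi_d(p)\bigr)^\top \in \mathbb{R}^d.
\]
Because each entry function lies in $V$, there exist unique scalars $c_{jkl}$ (for $1\le l\le d$) such that
\[
[\mathbf{G}(p)]_{jk} \;=\; \sum_{l=1}^{d} c_{jkl}\,\phi_l(p) \qquad \text{for all } p\in\mathbb{P}.
\]

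Next, I would define the latent tensor $\mathcal{C}\in\mathbb{R}^{n_1\times n_2\times d}$ by $[\mathcal{C}]_{jkl} := c_{jkl}$, and then verify by direct expansion against the mode-3 product convention (as recalled in Appendix~\ref{appendix:notatins}) that $[\mathcal{C}\times_3\Phi(p)]_{jk} = \sum_{l}[\mathcal{C}]_{jkl}[\Phi(p)]_l$, which by construction equals $[\mathbf{G}(p)]_{jk}$. Reinstating the index $i$ and repeating the argument for each $i=1,\ldots,m$ yields the required tensors $\mathcal{C}_i$ and functions $\Phi_i$ simultaneously.

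There is no real analytic obstruction here; the only thing to be careful about is the bookkeeping of the mode-3 product (making sure the third-mode contraction matches the coefficient expansion used above) and the fact that the basis $\Phi_i$ need not have any regularity beyond what is inherited from the entry functions themselves, since the theorem only asserts pointwise equality on $\mathbb{P}$. Continuity or smoothness of $\Phi_i$ can be obtained for free if the entry functions are continuous (a basis can always be extracted from the spanning set, which inherits the regularity), but that is not part of what this statement requires.
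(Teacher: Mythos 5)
Your proof is correct and follows essentially the same route as the paper's own argument: extract a basis $\{\varphi_{i,\ell}\}_{\ell=1}^{d_i}$ from the $d_i$-dimensional span of the entry functions, expand each $[\mathbf{G}_i(p)]_{jk}$ in that basis, collect the coefficients into the tensor $\mathcal{C}_i$, and identify the result with the mode-3 product $\mathcal{C}_i \times_3 \Phi_i(p)$. Your added observations (uniqueness of the coefficients once a basis is fixed, and that no regularity of $\Phi_i$ is needed for the pointwise identity) are correct refinements but do not change the argument.
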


Selecting an appropriate $\Phi_i(\cdot)$ is crucial to the effectiveness of the proposed compact mapping.  
In NeuMatC, we construct the function $\Phi_i(\cdot)$ using a multilayer perceptron (MLP) $\Phi_{\theta_i}(\cdot)$, considering  its strong effectiveness capability.  
The mapping of each output component is then given by $ 		\widetilde{\mathbf{G}}_i(\cdot) = \mathcal{C}_i \times_3 \Phi_{\theta_i}(\cdot)$,
where $\mathcal{C}_i$ is a learnable latent tensor and $\theta_i$ denotes the MLP parameters.  

{\textit{By exploiting the low-rankness along the parameter dimension, the suggested compact mapping enables efficient computation for arbitrary parameters, requiring only a few basic operations (i.e., matrix multiplications and nonlinear activations).}}

{\color{black}
\begin{remark}
	Traditional low-rank methods such as randomized SVD typically rely on a strong assumption that each individual matrix is low-rank. In contrast, NeuMatC does not assume low-rankness of the individual matrices. We adopt a \emph{milder} assumption commonly observed in real-world applications, i.e., low-rankness along the \emph{parameter dimension}. That is the collection $\{\mathbf{G}(p): p \in \mathcal{P}\}$ approximately lies in a low-dimensional subspace of $\mathbb{R}^{n \times n}$.
\end{remark}}

\paragraph{Continuity in NeuMatC.} 
Another important property commonly observed in parametric matrix operation results is \textit{continuity}. 
In this part, we demonstrate that, in addition to capturing the low-rank structure, the proposed NeuMatC framework also captures this continuity.
Theoretical justification is provided in the following theorem.

\begin{theorem}[Lipschitz Continuity  in NeuMatC]
	\label{thm:lipschitz}
	Given the mapping \(	\widetilde{\mathbf{G}}_i(\cdot) = \mathcal{C}_i \times_3 \Phi_{\theta_i}(\cdot) \), where \( \mathcal{C}_i \in \mathbb{R}^{n_{1i} \times n_{2i} \times d_i} \) is a  latent tensor and \( \Phi_{\theta_i}(\cdot): \mathbb{R} \to \mathbb{R}^{d_i} \) is an $L$-layer MLP with Lipschitz-continuous activation function $\sigma(\cdot)$ with Lipschitz constant $L_\sigma$. Assume that  $\|\mathcal{C}_i\|_1 \leq \kappa$ and each weight matrix $\mathbf{W}_\ell$ in the MLP satisfies $\|\mathbf{W}_\ell\|_1 \leq \eta$.
	Then for any $p_1, p_2 \in \mathbb{R}$, the output satisfies the bound 
	\begin{equation}
		\| 	\widetilde{\mathbf{G}}_i(p_1) - 	\widetilde{\mathbf{G}}_i(p_2) \|_F \leq  \kappa (L_\sigma \eta)^L \cdot |p_1 - p_2|.
	\end{equation}
\end{theorem}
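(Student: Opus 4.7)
The plan is to split the bound into two independent pieces: first, a linear step that moves the parameter dependence out of the mode-3 product and controls the Frobenius norm of the resulting matrix by $\|\mathcal{C}_i\|_1$, and second, an $L$-fold inductive bound on the Lipschitz constant of the MLP $\Phi_{\theta_i}$.

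First, I would exploit the linearity of the mode-3 product in its second argument to write
\[
\widetilde{\mathbf{G}}_i(p_1) - \widetilde{\mathbf{G}}_i(p_2) = \mathcal{C}_i \times_3 \bigl(\Phi_{\theta_i}(p_1) - \Phi_{\theta_i}(p_2)\bigr),
\]
so the whole task reduces to bounding $\|\mathcal{C}_i \times_3 \mathbf{v}\|_F$ for $\mathbf{v} = \Phi_{\theta_i}(p_1) - \Phi_{\theta_i}(p_2) \in \mathbb{R}^{d_i}$. Unfolding the mode-3 product gives, entrywise, $(\mathcal{C}_i \times_3 \mathbf{v})_{jk} = \sum_\ell (\mathcal{C}_i)_{jk\ell}\, v_\ell$. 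From here a short triangle-inequality/Cauchy–Schwarz argument shows $\|\mathcal{C}_i \times_3 \mathbf{v}\|_F \le \|\mathcal{C}_i\|_1 \|\mathbf{v}\|$ in the matching norm, giving the factor $\kappa$ in the final bound.

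Second, I would bound $\|\Phi_{\theta_i}(p_1) - \Phi_{\theta_i}(p_2)\|$ by induction on the layer index. Writing the MLP as $\mathbf{h}_\ell = \sigma(\mathbf{W}_\ell \mathbf{h}_{\ell-1} + \mathbf{b}_\ell)$ with $\mathbf{h}_0 = p$, each layer contributes a factor of $L_\sigma$ from the activation's Lipschitz constant and a factor of $\eta$ from the weight-norm hypothesis $\|\mathbf{W}_\ell\|_1 \le \eta$, since the bias cancels in the difference. Iterating over all $L$ layers yields
\[
\|\Phi_{\theta_i}(p_1) - \Phi_{\theta_i}(p_2)\| \le (L_\sigma \eta)^L |p_1 - p_2|.
\]
Combining this with the mode-3 bound gives the claimed inequality.

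The main obstacle is bookkeeping of norms: the hypothesis involves the $1$-norms $\|\mathcal{C}_i\|_1$ and $\|\mathbf{W}_\ell\|_1$, but the conclusion is stated in the Frobenius norm and the MLP output lives in $\mathbb{R}^{d_i}$. I would therefore be careful to specify that these $1$-norms are interpreted consistently (either as entrywise $\ell_1$ norms or as induced operator $1$-norms) so that the chained inequalities $\|\mathbf{W}_\ell \mathbf{x}\| \le \eta \|\mathbf{x}\|$ and $\|\mathcal{C}_i \times_3 \mathbf{v}\|_F \le \kappa \|\mathbf{v}\|$ hold in compatible norms; any loose constant from passing between norms would simply be absorbed into $\kappa$ and $\eta$. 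Apart from this, the proof is a routine composition-of-Lipschitz-maps argument.
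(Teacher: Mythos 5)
Your proposal is correct and follows essentially the same route as the paper's own proof: linearity of the mode-3 product to isolate $\mathcal{C}_i \times_3 \bigl(\Phi_{\theta_i}(p_1)-\Phi_{\theta_i}(p_2)\bigr)$, the bound $\|\mathcal{C}_i \times_3 \mathbf{v}\|_F \leq \|\mathcal{C}_i\|_1 \|\mathbf{v}\|_1$, and a layer-by-layer induction giving the factor $(L_\sigma \eta)^L$. Your extra care about which $1$-norms are meant (entrywise versus induced) and the observation that biases drop out of the Lipschitz difference are, if anything, slightly more scrupulous than the paper's version, which states the tensor-norm inequality without proof and defines the MLP without bias terms.
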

The proof of Theorem~\ref{thm:lipschitz} is provided in Appendix~\ref{appendix:proof_lipschitz}.

\begin{remark}
	Theorem~\ref{thm:lipschitz} reveals that the continuity of the proposed mapping depends on the Lipschitz constant of the activation function. In practice, we use sine activations $\sigma(\cdot) = \sin(\omega \cdot)$, which are Lipschitz continuous. The frequency parameter $\omega$ offers a direct control over output continuity. Smaller $\omega$ leads to smoother variation, while larger $\omega$ allows the model to capture rapid changes.
\end{remark}

{\textit{In summary, the proposed mapping harmoniously exploits \textbf{ both the low-rankness and the continuity} of matrix operation results along the parameter dimension, enabling \textbf{both efficient and  effective  computation}  over the parameter domain.}}

\subsection{Training  of NeuMatC}\label{sec:loss}

In this section, we present how NeuMatC is  trained to effectively learn the  low-rank and continuous mapping. Specifically, NeuMatC is trained in an unsupervised manner using only a few samples to learn the  mapping. Given limited samples, we design \textbf{\textit{an algebraic structure-aware loss function}}, which encourages algebraic structure consistency at a set of unsupervised collocation points. Correspondingly, we further introduce \textbf{\textit{failure-informed adaptive sampling}} to select informative collocation points to improve generalization.

\paragraph{Algebraic Structure-Aware Loss Function.} 
In this part, we design the loss function of NeuMatC, which enables learning from limited samples by enforcing algebraic structure consistency. 
Suppose we are given a parametric matrix $\mathbf{A}(p) \in \mathbb{R}^{n_1 \times n_2}$ that varies continuously with parameter $p$. 
The set of supervised sampling points is denoted by 
$\mathcal{D}_{\text{data}} = \{ (p_j, \mathbf{G}(p_j)) \}_{j=1}^{N_s}$, 
where each $\mathbf{G}(p_j)$ denotes a ground-truth output computed by conventional numerical solvers. 
{\color{black} To further alleviate reliance on a high sampling rate of supervised sampling points, 
we introduce a separate set of unsupervised collocation points 
$\mathcal{D}_{\text{col}} = \{ q_\ell \}_{\ell=1}^{N_c}$, 
at which we  enforce consistency with the  algebraic structure of the matrix operation without access to ground-truth outputs.}
Using the MLP-parameterized representation, we formulate the following loss function:

\begin{equation}
	\min_{\{\mathcal{C}_i, \theta_i\}_{i=1}^m}  
	\underbrace{
		\sum_{i=1}^m \sum_{j=1}^{N_s} 
		\left\| \mathcal{C}_i \times_3 \Phi_{\theta_i}(p_j) - \mathbf{G}_i(p_j) \right\|_F^2
	}_{\mathclap{\text{\small Data Fidelity Loss}}}
	+
	\lambda \underbrace{
		\sum_{\ell=1}^{N_c} \left\| \mathscr{R}(q_\ell) \right\|_F^2
	}_{\mathclap{\text{\small Structure Consistency Loss}}}, 
\end{equation}
where $\lambda$ is a trade-off parameter. 
The residual $\mathscr{R}(\cdot)$ measures the violation of the underlying algebraic structure and depends on the specific matrix operation. 
For example, in the case of matrix inversion where the target satisfies 
$\mathbf{A}(p)\mathbf{G}(p) = \mathbf{I}$, 
the residual is 
$\mathscr{R}(p) = \mathbf{A}(p) \cdot \left( \mathcal{C} \times_3 \Phi_{\theta}(p) \right) - \mathbf{I}.$
For SVD, the residual $\mathscr{R}(p)$ includes both the reconstruction error 
$\mathbf{A}(p) - \widetilde{\mathbf{U}}(p)\widetilde{\mathbf{S}}(p)\widetilde{\mathbf{V}}(p)^\top$ 
and the orthogonality violations 
$\widetilde{\mathbf{U}}(p)^\top \widetilde{\mathbf{U}}(p) - \mathbf{I}$ 
and 
$\widetilde{\mathbf{V}}(p)^\top \widetilde{\mathbf{V}}(p) - \mathbf{I}$, 
where $\widetilde{\mathbf{G}}(p) = \{ \widetilde{\mathbf{U}}(p), \widetilde{\mathbf{S}}(p), \widetilde{\mathbf{V}}(p) \}$ 
is the SVD output of NeuMatC.
\textit{The structure consistency loss reduces reliance on a high sampling rate of supervised samples across the parameter domain.}

\begin{remark}[Connection to Physics-Informed Neural Networks (PINNs)]
	NeuMatC shares a conceptual parallel with PINNs in the use of structural knowledge to improve generalization.  Specifically,  PINNs incorporate  physical laws (e.g., differential equations) to guide learning in scientific problems, while NeuMatC incorporates algebraic structures of matrix operations to guide learning.
\end{remark}

\paragraph{Failure-Informed Adaptive Sampling.} In the loss function designed above, the way collocation points are selected is also important for enforcing the underlying algebraic structure. In this part, we introduce a \textit{\textbf{failure-informed adaptive sampling strategy}}, which adaptively identifies informative collocation points in regions where NeuMatC is more likely to violate the underlying structure.

This failure-informed adaptive sampling strategy is detailed as follows: At each iteration, the trained mapping $\mathbf{G}(\cdot)$ is evaluated on a set of candidate parameters, and we compute the residual $\mathscr{R}(\cdot)$ (see Section~\ref{sec:loss}) to assess structural consistency. For each point $p$, we define a residual-based score $ 	g(p) := \|\mathscr{R}(p)\|_F^2 - \epsilon_r, $
where $\epsilon_r > 0$ is a predefined threshold. Points for which $g(p) > 0$ are considered to violate the algebraic structure identity and are thus labeled as failures. These points collectively form the \emph{failure region} $ \Omega_F = \{ p \in \paramdomain \mid g(p) > 0 \}. $

To quantify the significance of these violations, we estimate the proportion of candidate points that fall into the failure region, which we refer to as the \emph{failure probability} $ \hat{P}_{\mathcal{F}} = \frac{1}{N_s} \sum_{i=1}^{N_s} \mathbb{I}_{\Omega_F}(p_i), $
where $\mathbb{I}_{\Omega_F}$ is the indicator function and $\{p_i\}$ are uniformly sampled from the parameter domain.
If this estimated failure probability exceeds a given tolerance $\epsilon_p$, we select points from $\Omega_F$ with the largest residual scores $g(p)$ and add them to the collocation set for retraining. Otherwise, the refinement process terminates.

In our implementation, we adopt the widely used adaptive moment estimation (Adam) algorithm \cite{kingma2015adam}. The
effectiveness of adaptive sampling is further discussed in Section ~\ref{sec:discussion}. \textit{\textbf{The complete NeuMatC procedure is summarized in Algorithm~\ref{alg:fi_NeuMatC}. }}

\subsection{Complexity Analysis}

In this part, we present  the complexity analysis of  the proposed NeuMatC.  Given a parametric matrix $\mathbf{A}(p) \in \mathbb{R}^{n_1 \times n_2}$, once NeuMatC is learned, operation results of $\mathbf{A}(p)$ for any  $p \in \mathbb{P}$ are given by
\begin{equation}
	\mathbf{G}_i(p) = \mathcal{C}_i \times_3 \Phi_{\theta_i}(p), \quad i = 1, \dots, m,
\end{equation}
where $\Phi_{\theta_i}: \mathbb{R} \to \mathbb{R}^{d_i}$ is a  MLP and $\mathcal{C}_i \in \mathbb{R}^{n_{1i} \times n_{2i} \times d_i}$ is a learned latent tensor. This computation involves only a few basic operations (i.e., matrix multiplications and nonlinear activations).  Specifically, using $ \mathbf{G}_i(p)  $ as an example, the forward pass has complexity $\mathcal{O}(L W^2)$ for an $L$-layer network of width $W$ and the matrix-tensor product has complexity $\mathcal{O}(n_{1i} n_{2i} d)$. In contrast, conventional numerical methods for matrix operations, such as the Golub–Kahan algorithm for SVD (typically $\mathcal{O}(n_1 n_2^2)$ for $n_1 \geq n_2$ with large constant factors due to iterative procedures), require significantly higher computational cost and offer limited parallelism due to their sequential nature.
\textbf{\textit{Thus, by leveraging the low-rankness and continuity along the parameter dimension, NeuMatC offers a computationally advantageous alternative to conventional numerical solvers. }}

\section{Experimental Results}\label{sec:experiments}

In this section, we evaluate NeuMatC on representative tasks including matrix inversion and SVD on both synthetic and real-world parametric matrices. \textbf{\textit{In many practical applications, achieving machine precision is often unnecessary \cite{dai2015low}. Our experiments show that NeuMatC offers significant speedup while maintaining acceptable accuracy.}}

{\color{black}We also include an additional experiment in Appendix~\ref{appendix:pde_experiment} to evaluate NeuMatC on large-scale parametric linear systems arising from PDE discretizations.}

\paragraph{Benchmarks.}  
We compare our proposed NeuMatC with a set of representative baselines on both matrix inversion and SVD tasks. For each task, we consider two categories: 
(i) \textbf{\textit{pointwise methods}}, which tackle  matrix operations at different parameters independently  without leveraging the relations along the parameter dimension; and 
(ii) \textbf{\textit{parametric methods}}, which exploit the continuous structure of matrices along the parameter dimension.

For matrix inversion, representative pointwise methods include:  
(1) \textit{LU decomposition implemented in NumPy} (\textbf{LU (NumPy)}), a standard numerical algorithm applied independently at each parameter point; and  
(2) \textit{Neural Newton Inversion} (\textbf{NNI})~\cite{zhang2008zhang}, which integrates neural networks with Newton-style iterations to approximate the inverse of a fixed matrix.  
Parametric methods include:  
(3) \textit{Continuous Operator Interpolation} (\textbf{COI})~\cite{zahm2016interpolation}, which approximates inverse operators by interpolating a set of precomputed inverses along the parameter dimension; and  
(4) \textit{ZNN-Inversion} (\textbf{ZNNI})~\cite{gerontitis2023novel}, which extends ZNN to continuously compute matrix inverses using a neural dynamical system.

For matrix SVD,  representative pointwise methods include:  
(1) \textit{SVD implemented in NumPy} (\textbf{SVD (NumPy)}), which applies standard SVD independently at each parameter point; and  
(2) \textit{Randomized SVD} (\textbf{RSVD})~\cite{halko2011finding}, which computes approximate low-rank decompositions using random projections and is applied independently for each parameter.  
Parametric methods include:   
\textbf{(3) \textit{Continuous Randomized SVD} (CRSVD)}~\cite{kressner2024randomized}, which extends randomized SVD to parametric matrices by applying a shared random projection across the parameter dimension; and (4) \textit{ZNN-SVD} (\textbf{ZNN-SVD})~\cite{chen2020online}, which extends ZNN to continuously compute SVD using a neural dynamical system and employs the eighth-order  discretization formula.

\paragraph{Metrics.}
To evaluate \textbf{\textit{computational efficiency}}, we report runtime and GFLOPs, \textit{averaged over all $N_{\text{test}}$ \emph{test parameter points}}.
GFLOPs denote the total number of floating-point operations (in billions) required to produce the output, providing a hardware-agnostic measure of computational complexity.
To assess \textbf{\textit{computation quality}}, we report the \textit{Relative Error} (RelErr), also averaged over all test parameter points.
Specifically, for matrix inversion we compute
$\mathrm{RelErr}_{\mathrm{inv}}
=\mathbb{E}_{p\in\mathcal{P}_{\text{test}}}\!\big[\|\mathbf{A}(p)\hat{\mathbf{A}}^{-1}(p)-\mathbf{I}\|_F^2/\|\mathbf{I}\|_F^2\big]$,
and for SVD we compute
$\mathrm{RelErr}_{\mathrm{svd}}
=\mathbb{E}_{p\in\mathcal{P}_{\text{test}}}\!\big[\|\hat{\mathbf{U}}(p)\hat{\mathbf{S}}(p)\hat{\mathbf{V}}(p)^\top-\mathbf{A}(p)\|_F^2/\|\mathbf{A}(p)\|_F^2\big]$.
Here, $\mathcal{P}_{\text{test}}$ denotes the set of $N_{\text{test}}$ parameter points used for testing, and
$\hat{\mathbf{A}}^{-1}(p)$, $\hat{\mathbf{U}}(p)$, $\hat{\mathbf{S}}(p)$, $\hat{\mathbf{V}}(p)$ are NeuMatC outputs.

\textit{For reproducibility, we provide training details (including the experimental environment, sampling configuration, and hyperparameter choices) in Appendix~\ref{app:es}.}  {\color{black}For operations such as SVD, where non-uniqueness may introduce discontinuities along the parameter dimension, we apply an alignment procedure on the training data following  \cite{tohidian2013dft} to enforce continuity across parameter samples. }

\subsection{Synthetic Data Experiments.}

\paragraph{Setup.}
In this part, we conduct experiments on 20 sets of synthetic data to evaluate the performance of the proposed NeuMatC.
The parameterized matrices \(\mathbf{H}(p) \in \mathbb{R}^{n \times n}\) are generated as
\(\mathbf{H}(p) = \mathbf{A}(p)\mathbf{B}(p)^{\top} + \varepsilon \mathbf{I}\),
where
\(\mathbf{A}(p) = \mathbf{A}_0 \circ \sin(2\pi \mathbf{F}_A p + \boldsymbol{\Phi}_A)\)
and
\(\mathbf{B}(p) = \mathbf{B}_0 \circ \cos(2\pi \mathbf{F}_B p + \boldsymbol{\Phi}_B)\),
with both \(\mathbf{A}(p), \mathbf{B}(p) \in \mathbb{R}^{n \times r}\).
Here, \(\circ\) denotes the elementwise product.
The phase matrices \(\boldsymbol{\Phi}_A, \boldsymbol{\Phi}_B\) are drawn uniformly from \([0,2\pi]\),
and the frequency matrices \(\mathbf{F}_A, \mathbf{F}_B\) are drawn uniformly from \([0.5,1.5]\).
The coefficient matrices \(\mathbf{A}_0, \mathbf{B}_0\) are sampled from standard normal distributions with column-wise energy decay.
For NeuMatC, we uniformly sample \(p\) from \([0,1]\) at 40 parameter values for training and 100 for testing. {\color{black}Visualizations of the low-rank structure on one synthetic 
datasets are provided in Appendix~\ref{appendix:lowrank-visualization}.}

\paragraph{Results.} 
Tables~\ref{tab:inv-summary} and~\ref{tab:svd-summary} show the results for both inversion and SVD tasks. Compared to pointwise methods,  NNI and RSVD fall short in both accuracy and runtime due to underexploration of low-rankness and  continuity. Compared to parametric methods, COI and ZNN-based baselines partially leverage parameter continuity but suffer from high runtime due to costly interpolation or sequential ODE integration. CRSVD improves efficiency via shared random projections but still incurs higher GFLOPs and lower accuracy. \textit{In contrast, NeuMatC fully exploits both low-rankness and parameter continuity, achieving the best overall trade-off between accuracy and efficiency.}

\begin{table}[h]  
	\centering
	\scriptsize
	\caption{Matrix inversion performance at $n=1024$. Results are averaged over 20 independently generated synthetic datasets. “MP” indicates machine precision.}
	
	\begin{tabular}{M{3cm}|M{2cm}M{2cm}|M{3cm}}
		\toprule
		\textbf{Method} 
		& \textbf{CPU Time (ms) $\downarrow$} 
		& \textbf{GFLOPs$\downarrow$} 
		& \textbf{RelErr $\downarrow$} \\
		\midrule
		\textbf{LU (NumPy)} 
		& $3.4 \times 10^{1}$ 
		& $2.9 \times 10^{0}$ 
		& MP \\
		
		\textbf{NNI} 
		& $2.7 \times 10^{2}$ 
		& $2.6 \times 10^{1}$ 
		& $9.2 \times 10^{-1} \pm 3.5 \times 10^{-1}$ \\
		
		\textbf{COI} 
		& $4.1 \times 10^{3}$ 
		& $3.0 \times 10^{0}$ 
		& $4.5 \times 10^{-3} \pm 4.6 \times 10^{-5}$ \\
		
		\textbf{ZNNI} 
		& $1.5 \times 10^{3}$ 
		& $2.2 \times 10^{2}$ 
		& $3.5 \times 10^{-3} \pm 9.3 \times 10^{-6}$ \\
		
		\rowcolor{gray!10}
		\textbf{NeuMatC} 
		& $\mathbf{4.3 \times 10^{0}}$ 
		& $\mathbf{6.3 \times 10^{-2}}$ 
		& $\mathbf{7.2 \times 10^{-4} \pm 1.8 \times 10^{-4}}$ \\
		\bottomrule
	\end{tabular}
	\label{tab:inv-summary}
\end{table}

\begin{table}[h] 
	\centering
	\scriptsize
	\caption{Matrix SVD performance at $n=512$. Results are averaged over 20 independently generated synthetic datasets. “MP” indicates machine precision.}
	
	\begin{tabular}{M{3cm}|M{2cm}M{2cm}|M{3cm}}
		\toprule
		\textbf{Method} 
		& \textbf{CPU Time (ms) $\downarrow$} 
		& \textbf{GFLOPs $\downarrow$}
		& \textbf{RelErr $\downarrow$} \\
		\midrule
		\textbf{SVD (NumPy)}
		& $6.1 \times 10^{1}$ 
		& $1.1 \times 10^{0}$ 
		& MP \\
		
		\textbf{RSVD}
		& $1.2 \times 10^{1}$ 
		& $4.2 \times 10^{-1}$ 
		& $8.3 \times 10^{-3} \pm 2.3 \times 10^{-4}$  \\
		
		\textbf{CRSVD}           
		& $3.4 \times 10^{0}$ 
		& $4.8 \times 10^{-1}$ 
		& $1.5 \times 10^{-2} \pm 4.2 \times 10^{-3}$ \\
		
		\textbf{ZNN-SVD}        
		& $2.4 \times 10^{3}$ 
		& $4.1 \times 10^{0}$ 
		& $4.4 \times 10^{-1} \pm 2.4 \times 10^{-3}$ \\
		
		\rowcolor{gray!10}
		\textbf{NeuMatC} 
		& $\mathbf{9.8 \times 10^{-1}}$ 
		& $\mathbf{7.5 \times 10^{-2}}$ 
		& $\mathbf{6.4 \times 10^{-4} \pm 2.6 \times 10^{-4}}$ \\
		\bottomrule
	\end{tabular}
	\label{tab:svd-summary}

\end{table}

{\color{black} To better validate the low-rank structure exploited by NeuMatC, we conduct  synthetic experiments where the low-rankness along the parameter dimension is explicitly controlled in Appendix \ref{alg:exp}.}

\subsection{Real Data Experiments.}

\paragraph{Setup.}  Parametric matrix operations are widely encountered in wireless communication, where matrices such as MIMO channels vary continuously with  parameters like carrier frequency. In these scenarios, repeated matrix inversion or SVD is often required across a large number of sampled states, posing significant computational challenges in latency-critical applications such as precoding, beamforming, and detection.
To evaluate NeuMatC in a realistic scenario, we adopt the DeepMIMO dataset\footnote{Available at \url{https://www.deepmimo.net/}.}, a widely used simulation framework for millimeter-wave communication, to generate sequences of $256 \times 256$ real-valued MIMO channel matrices parameterized by carrier frequency \footnote{The DeepMIMO dataset gives complex parametric matrices. Without loss of generality, we extract the real part of the matrices for experiments. However, the proposed NeuMatC can be naturally extended to the complex case by separately modeling the real and imaginary components.}.  {\color{black}Visualizations of the low-rank structure on the real
	datasets are provided in Appendix~\ref{appendix:lowrank-visualization}.}

\paragraph{Results.} Table~\ref{tab:real-matrix-tasks} reports the results on real MIMO channel matrices for both inversion and SVD tasks.  
For matrix inversion, LU (NumPy) achieves machine precision but is relatively slow since it recomputes each inverse independently without leveraging parameter continuity.    Among parametric baselines, COI achieves moderate accuracy but requires costly interpolation, while ZNNI partially exploits continuity but suffers from excessive runtime due to sequential ODE integration.  
For matrix SVD, SVD (NumPy) provides exact results but incurs higher runtime.   RSVD and CRSVD accelerate computation via random projections, but their accuracy degrades, especially for CRSVD despite its shared projections across parameters.  
ZNN-SVD obtains relatively poor accuracy, and its runtime exceeds 200 ms per matrix.  
\textit{In contrast, NeuMatC reduces runtime by about an order of magnitude compared with  baselines while maintaining acceptable accuracy, highlighting NeuMatC’s  potential for real-world applications such as channel precoding and beamforming.}

\begin{table}[h]   
	\centering
	\scriptsize
	\renewcommand\arraystretch{1.05}
	\setlength{\tabcolsep}{1.8mm}
	\caption{Performance comparison on real channel data for matrix inversion (left) and SVD (right). “MP” indicates machine precision.}

	\begin{tabular}{@{}l|c|M{2cm}@{\hspace{0.5cm}}l|c|M{2cm}@{}}
		\toprule
		\multicolumn{3}{c}{\textbf{(a) Matrix Inversion}} 
		& \multicolumn{3}{c}{\textbf{(b) SVD}} \\
		\cmidrule(lr){1-3} \cmidrule(lr){4-6}
		\textbf{Method} & CPU Time (ms) $\downarrow$ & RelErr $\downarrow$
		& \textbf{Method} & CPU Time (ms) $\downarrow$ & \quad RelErr  $\downarrow$  \\
		\midrule
		\textbf{LU (NumPy)}        
		& $1.8 \times 10^{0}$ & MP    
		& \textbf{SVD (NumPy)}       
		& $7.2 \times 10^{0}$ & MP \\
		
		\textbf{NNI}        
		& $2.7 \times 10^{0}$ & $6.5 \times 10^{-1}$  
		& \textbf{RSVD}              
		& $1.6 \times 10^{0}$ & $8.7 \times 10^{-2}$ \\
		
		\textbf{COI}     
		& $2.4 \times 10^{1}$ & $2.3 \times 10^{-3}$  
		& \textbf{CRSVD}           
		& $1.2 \times 10^{0}$ & $3.4 \times 10^{-1}$ \\
		
		\textbf{ZNNI}       
		& $5.3 \times 10^{1}$ & $1.4 \times 10^{-2}$  
		& \textbf{ZNN-SVD} 
		& $2.0 \times 10^{2}$ & $1.3 \times 10^{-1}$ \\
		
		\rowcolor{gray!10}
		\textbf{NeuMatC} 
		& $\mathbf{5.8 \times 10^{-1}}$ & $\mathbf{8.3 \times 10^{-3}}$ 
		& \textbf{NeuMatC} 
		& $\mathbf{4.5 \times 10^{-1}}$ & $\mathbf{2.6 \times 10^{-3}}$ \\
		\bottomrule
	\end{tabular}

	\label{tab:real-matrix-tasks}
\end{table}

\section{Discussions}\label{sec:discussion}

$\bullet$ ~{\it \textbf{First, we further analyze the computational  effectiveness of our NeuMatC.}}

\paragraph{Scalability with Matrix Size.}
In this part, we examine how NeuMatC's computational efficiency scales with matrix size, using matrix inversion as a representative example. As shown in Figure \ref{fig:matrix_size_comparsion} (a), conventional numerical solvers exhibit rapidly increasing time as matrix size grows. In contrast, NeuMatC maintains a nearly constant time across matrix sizes. This is because NeuMatC learns a parameter-to-result mapping, which requires only a few matrix multiplications and nonlinear activations that are highly parallelizable and hardware-efficient. These results demonstrate that NeuMatC is well suited for high-throughput applications such as wireless communication, where rapid and repeated matrix computations are required.

\begin{figure}[h]
	\centering
	\includegraphics[width=0.65\textwidth]{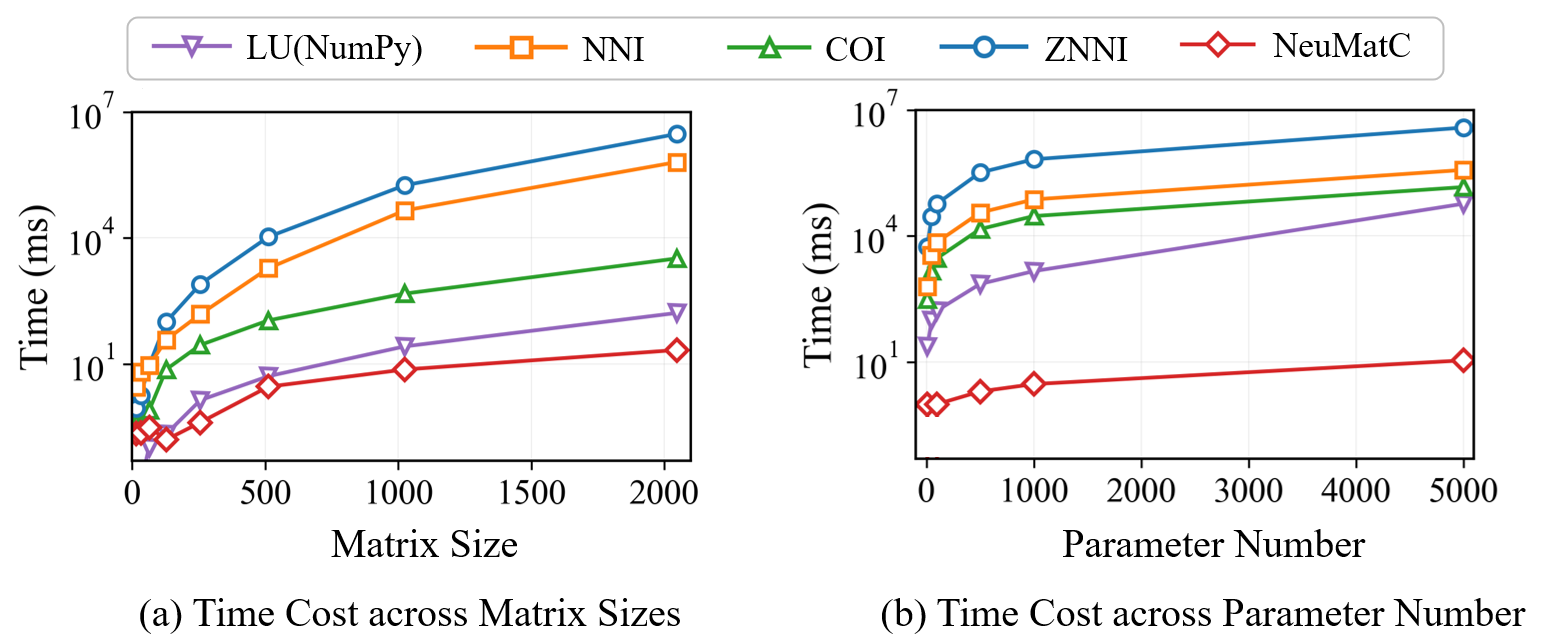}
	\caption{     Time cost  across (a) matrix size and (b) inferred parameter number on the inversion task.
	 In subfigure (a), we report the average time cost for processing a single matrix, 
	whereas in subfigure (b) we fix the matrix size at $512 \times 512$.
 }
	\label{fig:matrix_size_comparsion}

\end{figure}

\paragraph{Scalability with  Inferred Parameter Number.}
In many real-world scenarios such as wireless communications, it is often necessary to compute matrix operations at thousands of parameter points simultaneously. This motivates us to evaluate the scalability of NeuMatC  with  inferred parameter number. 
As shown in Figure~\ref{fig:matrix_size_comparsion}(b), the runtime of baseline methods increases rapidly with the number of matrices, since these methods underutilize the continuity and low-rankness along the parameter dimension. In contrast, NeuMatC fully leverages this inherent structure. NeuMatC evaluates all parameter samples jointly through a single MLP forward pass followed by a matrix–tensor product, which involves only a small number of matrix multiplications and nonlinear operations and is highly optimized on modern hardware.
As a result, the advantage of NeuMatC becomes increasingly significant as the number of matrices grows.

\paragraph{Optimization on GPU Platforms.}
Previous experiments were conducted on CPU to ensure fairness across baseline methods; however, NeuMatC can be further accelerated on GPU platforms. Its computations rely on matrix multiplications and nonlinear activations, which are highly optimized on GPUs. Table~\ref{tab:NeuMatC_speedup_large} reports NeuMatC's runtime across increasing matrix sizes on both CPU and GPU. Although NeuMatC already achieves low computation time on CPU, it benefits increasingly from GPU acceleration as the matrix size grows. These results confirm that NeuMatC can take effective advantage of GPU parallelism. {\color{black}  Results of batched matrix operations on GPU platforms are demonstrated in Appendix \ref{alg:exp}. }

\begin{table}[h]
	\centering
	\scriptsize
	\caption{Runtime and speedup of NeuMatC across larger matrix sizes on CPU and GPU.}
	
	\label{tab:NeuMatC_speedup_large}
	\begin{tabular}{lccccc}
		\toprule
		\textbf{Matrix Size} & \textbf{256} & \textbf{512} & \textbf{1024} & \textbf{2048} & \textbf{4096} \\
		\midrule
		CPU Time (ms)  & $2.6 \times 10^{0}$ & $7.8 \times 10^{0}$ & $4.1 \times 10^{0}$ & $1.6 \times 10^{1}$ & $6.3 \times 10^{1}$ \\
		GPU Time (ms)  & $2.4 \times 10^{0}$ & $3.3 \times 10^{0}$ & $6.0 \times 10^{-1}$ & $1.8 \times 10^{0}$ & $6.3 \times 10^{0}$ \\
		Speedup        & $1.1\times$ & $2.4\times$ & $6.8\times$ & $9.1\times$ & $10.1\times$ \\
		\bottomrule
	\end{tabular}

\end{table}

{\color{black}\paragraph{Amortization of Training Cost.} In this part, we investigate the amortization benefit of NeuMatC’s training cost by evaluating its performance on the SVD task. The datasets used in this experiment are generated following the same protocol described in our earlier synthetic data setup. We consider  matrices of size $512 \times 512$ and vary the batch size from 512 to 2048.
As shown in Table~\ref{tab:svd_train_and_runtime}, we observe that NeuMatC  still achieves substantial speedups over the high‑performance GPU library cuSOLVER when the full runtime is considered. For example, when processing 2048 matrices, NeuMatC achieves nearly $30$ times acceleration.  These results justify NeuMatC’s advantage in chanllenging applications such as wireless communication systems, which often involve computing over thousands of matrices \cite{jeon2020approximate,gong2023scalable}.}

\begin{table*}[h]
    \centering
    \scriptsize

    \caption{Overall runtime of NeuMatC (including training and inference)
    for SVDs of $512 \times 512$ matrices on the GPU platform. Results are
    averaged over 20 synthetic datasets. NeuMatC is trained for 500 epochs,
    reaching a relative error of approximately $1.0\times 10^{-3}$.}
    \label{tab:svd_train_and_runtime}

    \begin{tabular}{ccccc}
        \toprule
        \multirow{2}{*}{Batch size} & 
        \multirow{2}{*}{cuSOLVER (ms)} &
        \multicolumn{2}{c}{NeuMatC (ms)} &
        \multirow{2}{*}{Speedup} \\
        & & Train & Inference & \\
        \midrule
        512  & $1.33\times 10^{4}$ & $6.22\times 10^{3}$ & $1.30$ & $2.15\times$ \\
        1024 & $2.62\times 10^{4}$ & $6.22\times 10^{3}$ & $1.97$ & $4.22\times$ \\
        2048 & $1.79\times 10^{5}$ & $6.22\times 10^{3}$ & $3.41$ & $28.75\times$ \\
        \bottomrule
    \end{tabular}

\end{table*}

{\color{black} $\bullet$ ~{\it \textbf{Secondly, we further justify the generality of our NeuMatC.}}

\paragraph{Generalization on Multi-Dimensional Parameter Domains.} 

\begin{wraptable}{r}{0.45\linewidth}

	\centering
	\scriptsize
	\caption{{\color{black}NeuMatC on a 2D inversion task ($128\times128$). 
		Results averaged over 20 synthetic datasets.}}
	\label{tab:2d_inverse}

	\begin{tabular}{cc}
		\toprule
		CPU Time (ms)  & RelErr \\
		\midrule
		$3.7 \times 10^{-1}$ 
		& $4.8\times10^{-4} \pm 2.4\times10^{-5}$ \\
		\bottomrule
	\end{tabular}

\end{wraptable}

Although NeuMatC is introduced in the one-dimensional parameter case for clarity, the framework naturally generalizes to higher-dimensional parameter domains by extending 
$\Phi_\theta$ to take $p \in \mathbb{R}^d$ as input. 
To verify this capability, we conduct a two-dimensional inversion experiment on synthetic matrices of size $128 \times 128$. 
Each matrix is generated as 
$\mathbf{A}(p) = \mathbf{B}(p)\mathbf{B}(p)^{\top} + \varepsilon \mathbf{I}$,  
where $\mathbf{B}(p)$ is defined entries-wise by  
$\mathbf B_{i,j}(p_1,p_2)=\sum_{k} \alpha_{i,j,k}\,\phi_k(p_1,p_2)$  
with $\{\phi_k\}$ being Fourier basis functions.  
Parameter values $(p_1,p_2)$ are sampled on a $50\times50$ grid.  
 NeuMatC is trained using $5\%$ uniformly selected points and tested on 100 points. 
As shown in Table~\ref{tab:2d_inverse}, NeuMatC maintains low relative error and fast inference, 
demonstrating its effectiveness in multi-dimensional case.

\textcolor{black}{\paragraph{Validation on Additional Matrix Operations.}
	NeuMatC is a general framework for learning parametric matrix operations and can naturally support a wide range of matrix operations. We have demonstrated its effectiveness on two representative tasks, i.e., matrix inversion and SVD. To further validate this generality, we apply NeuMatC to three additional operations, including QR decomposition, Cholesky decomposition, and matrix exponential. These experiments follow the same synthetic data generation protocol as in previous sections, using 20 independent datasets of parameterized $512 \times 512$ matrices for each task. For the Cholesky case, the matrices are constructed to be symmetric and positive definite, consistent with the requirements of the decomposition. As shown in Table~\ref{tab:multi_ops}, the framework maintains high accuracy and fast inference performance across these tasks. }

\begin{table}[h]
	\centering
	\caption{	\textcolor{black}{
			Performance of NeuMatC on different matrix computation tasks.
			Reported runtimes correspond to the average time to compute a single $512 \times 512$ matrix.}
	}

	\scriptsize
	\textcolor{black}{
		\begin{tabular}{lccc}
			\toprule
			\textbf{Operation} & \textbf{RelErr} & \textbf{CPU Time (ms)} & \textbf{GPU Time (ms)} \\
			\midrule
			QR Decomposition     & $4.1 \times 10^{-4} \pm 2.3 \times 10^{-4}$ & $2.3 \times 10^{0}$ & $4.5 \times 10^{-1}$ \\
			Cholesky Decomposition & $6.8 \times 10^{-4} \pm 1.4 \times 10^{-4}$ & $7.9 \times 10^{-1}$ & $1.5 \times 10^{-1}$ \\
			Matrix Exponential   & $2.4 \times 10^{-3} \pm 2.6 \times 10^{-4}$ & $1.7 \times 10^{0}$ & $3.0 \times 10^{-1}$ \\
			\bottomrule
	\end{tabular}}
	\label{tab:multi_ops}
\end{table}
}

$\bullet$ ~{\it \textbf{Thirdly, we conduct an ablation study on the NeuMatC framework.}}

\paragraph{Effectiveness of Adaptive Sampling.}
\begin{wrapfigure}{r}{0.48\linewidth}
	\centering
	\scriptsize
	\captionsetup{type=table}
	\caption{Influence of different sampling strategies on the real-world SVD task.}
	\label{tab:collocation-ablation}

	\begin{tabular}{l|l|>{\columncolor{gray!10}}l}
		\toprule
		\textbf{Sampling} & Random & Adaptive \\
		\textbf{RelErr $\downarrow$}
		& $1.2 \times 10^{-2}$
		& $\mathbf{8.3 \times 10^{-3}}$ \\
		\bottomrule
	\end{tabular}

\end{wrapfigure}
We study the impact of the proposed failure-informed adaptive sampling strategy on
the performance of NeuMatC. Specifically, we use the real-world SVD task introduced
earlier as a representative case and fix the number of sampling points to 50.
We compare two types of collocation point selection:  
\textbf{Random sampling}, where points are randomly sampled (starting with 50,
adding 10 per round for 10 rounds), and  
\textbf{Adaptive sampling}, which follows the same schedule but uses the proposed
strategy to select collocation points.  
As shown in Table~\ref{tab:collocation-ablation}, adaptive sampling achieves a lower
RelErr, demonstrating its effectiveness in identifying informative collocation points
and improving generalization and structural fidelity without requiring additional data.

\paragraph{Analysis of NeuMatC Hyperparameters.} \label{sec:ablation}
This part investigates how different hyperparameters impact the performance of NeuMatC. We consider the real-world matrix inversion task introduced in the previous experiments and  vary key parameters, including the  layer number $ L $, layer width $ W $, latent dimension $d$, frequency parameter $\omega$,  balance weight $\lambda$, and the choice of activation function. As shown in Figure~\ref{fig:ablation}, NeuMatC exhibits robustness to hyperparameter variations: across a wide range of $L, W, d, \omega,$ and $\lambda$, the RelErr consistently stays below $10^{-2}$, with only minor degradation at extreme settings.

\begin{figure}[h]
	\centering

	\includegraphics[width=1\textwidth]{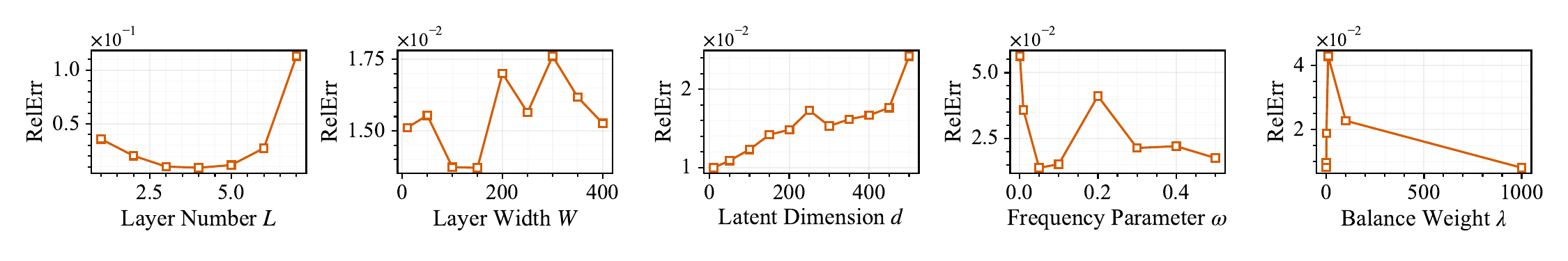}

	\caption{    The quantitative performances of NeuMatC, w.r.t. different values of hyperparameters.
	}
	\label{fig:ablation}
	
\end{figure}

\begin{table}[h]
	\centering
	\caption{Impact of activation function on NeuMatC.}

	\scriptsize
	\renewcommand\arraystretch{1.1}
	\setlength{\tabcolsep}{3.8mm}{
		\begin{tabular}{c|ccccc}
			\toprule
			\textbf{Activations} 
			& Sigmoid 
			& GELU 
			& ReLU 
			& Tanh 
			& \cellcolor{gray!10}Sine \\
			\midrule
			RelErr ($\downarrow$) 
			& $3.0 \times 10^{-2}$ 
			& $1.5 \times 10^{-1}$ 
			& $4.6 \times 10^{-2}$ 
			& $2.6 \times 10^{-2}$ 
			& \cellcolor{gray!10}\textbf{$8.3 \times 10^{-3}$} \\
			\bottomrule
		\end{tabular}
	}

	\label{tab:activation_ablation}
\end{table}

The choice of activation function has a  pronounced effect. As shown in Table~\ref{tab:activation_ablation}, the sine activation  obtains the lowest RelErr compared to classical activation functions such as ReLU, GELU, tanh, and sigmoid. This result aligns with prior findings on the benefits of periodic activation functions in representing fine-grained continuous signals \cite{sine}.

\section{Conclusion}

\label{sec:conclusion}
In this work, we propose NeuMatC, a general neural framework for efficient continuous parametric matrix operations. Unlike conventional numerical methods that tackle matrix operations at different parameter points independently, NeuMatC unsupervisedly learns a low-rank and continuous mapping from parameters to the results of matrix operations. Once trained, NeuMatC can directly map arbitrary parameters to their corresponding results using only a few basic operations (e.g., matrix multiplications and nonlinear activations), thereby significantly reducing redundant computational costs. Extensive experiments on both synthetic and real-world tasks demonstrate that NeuMatC achieves substantial speedups over classical methods while maintaining competitive accuracy.

We believe NeuMatC opens up a promising way for integrating AI into matrix computation, with
strong potential for high-throughput processing of continuous parametric matrices commonly
encountered in emerging real-world applications.

\bibliographystyle{unsrt}  
\bibliography{iclr2026_conference}

\appendix

\section{NOTATIONS}\label{appendix:notatins}
Table \ref{tab:notation} presents the main notations used in our paper.

\begin{table}[h]
	\centering
	\caption{The main notations used in the paper.}
	\scriptsize
	\label{tab:notation}
	\renewcommand{\arraystretch}{1.2}
	\begin{tabular}{cc}
		\hline
		\textbf{Symbol} & \textbf{Description} \\
		\hline
		\( a \in \mathbb{R} \) & Scalar \\
		\( \mathbf{a} \in \mathbb{R}^n \) & Vector \\
		\( \mathbf{A} \in \mathbb{R}^{n_1 \times n_2} \) & Matrix \\
		\( \mathcal{A} \in \mathbb{R}^{n_1 \times n_2 \times n_3} \) & Tensor \\
		\( \mathbf{A}^\top \) & Transpose of matrix \( \mathbf{A} \) \\
		\( \mathbf{A}^{-1} \) & Inverse of invertible matrix \( \mathbf{A} \) \\
		\( \mathbf{I} \in \mathbb{R}^{n \times n} \) & Identity matrix  \\
		\( \| \mathcal{A} \|_F \) & Frobenius norm of tensor: \( \left( \sum_{i,j,k} |\mathcal{A}_{ijk}|^2 \right)^{1/2} \) \\
		\( \| \mathcal{A} \|_1 \) & \( \ell_1 \)-norm of tensor: \( \sum_{i,j,k} |\mathcal{A}_{ijk}| \) \\
		\hline
	\end{tabular}
\end{table}

\begin{definition} [Mode-3 Tensor Folding and Unfolding \cite{kernfeld2015tproducts}]
	The mode-3 unfolding of a tensor $\mathcal{A} \in \mathbb{R}^{n_1 \times n_2 \times n_3}$ is denoted by $\mathbf{A}_{(3)}\in \mathbb{R}^{ n_3 \times n_1 n_2}$ and arranges the mode-3 fibers as the columns of $\mathbf{A}_{(3)}$. Concretely, the tensor element $\mathcal{A}_{(i_1, i_2, i_3)}$ maps to the matrix element $(\mathbf{A}_{(3)})_{\left(i_3, j\right)}$, where $j = i_1 + (i_2 - 1)n_1$.
	Mode-3 folding is the inverse operation of the mode-3 unfolding, represented by $ \mathcal{X} =\operatorname{Fold}_3(\mathbf{A}_{(3)}) $.
\end{definition}
\begin{definition}[Mode-$3$ Matrix-Tensor Product \cite{kernfeld2015tproducts}]\label{tm_product} The mode-$3$ matrix-tensor product of a third-order tensor $\mathcal{A} \in \mathbb{R}^{n_1\times n_2\times n_3}$ and a  matrix $\mathbf{M} \in \mathbb{R}^{ \hat{n}_3 \times n_3}$ is defined as
	\begin{equation*}
		\mathcal{A} \times_3 \mathbf{M}:=\operatorname{Fold}_3(\mathbf{M}\mathbf{A}_{(3)}),
	\end{equation*}
	where $ \mathcal{A} \times_3 \mathbf{M} $ is a tensor of size $ n_1 \times n_2 \times \hat{n}_3 $.
\end{definition}

%
%

\section{Theoretical Justification of the Continuity of Parametric Matrix Operations}
\label{appendix:continuity}

The proposed NeuMatC framework relies on the continuity of matrix operations with respect to the underlying parameter \( p \). That is, given a parameter-dependent matrix \( \mathbf{A}(p) \), the  matrix operation output \( \mathbf{G}(p) = \text{Operation}(\mathbf{A}(p)) \) is expected to vary continuously with parameter \( p \). This property ensures that the mapping \( p \mapsto \mathbf{G}(p) \) can be efficiently learned. In this section, we present theoretical results that guarantee continuity for common matrix operations under mild assumptions \footnote{To simplify the theoretical analysis, we consider parameter-dependent matrices whose entries are real analytic functions of the underlying parameter. Analyticity provides stronger regularity than  continuity. In practice, many parametric matrices (such as those arising in wireless channel modeling, system transfer functions, and physics-based simulations) can be modeled by analytic forms. }
.

\begin{lemma}[Continuity of Matrix Product]
	Let \( \mathbf{A}(p) \in \mathbb{R}^{m \times \hat{n}} \) and \( \mathbf{B}(p) \in \mathbb{R}^{\hat{n} \times n} \) be real analytic in \( p \in \mathbb{P} \). Then the product \( \mathbf{A}(p)\mathbf{B}(p) \in \mathbb{R}^{m \times n} \) is also real analytic in \( p \in \mathbb{P} \).
\end{lemma}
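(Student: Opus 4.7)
The plan is to reduce the matrix-valued statement to a scalar statement by working entrywise, then apply the standard closure properties of the ring of real analytic functions. First, I would fix the convention that a matrix-valued map $\mathbf{M}: \mathbb{P} \to \mathbb{R}^{m\times n}$ is real analytic on $\mathbb{P}$ if and only if each scalar entry $M_{ij}(p)$ admits a locally convergent power series expansion about every point of $\mathbb{P}$. Under this definition, the hypothesis of the lemma is exactly that each $A_{ik}(p)$ and each $B_{kj}(p)$ is a real analytic scalar function on $\mathbb{P}$.

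Next, I would simply write out the $(i,j)$-th entry of the product as the finite sum
\[
\bigl(\mathbf{A}(p)\mathbf{B}(p)\bigr)_{ij} \;=\; \sum_{k=1}^{\hat{n}} A_{ik}(p)\, B_{kj}(p),
\]
which reduces the lemma to the classical fact that a finite sum of finite products of real analytic scalar functions is real analytic. Both closure properties are standard: around any expansion point $p_0$, the Cauchy product of two power series convergent on a common neighborhood yields a power series convergent on at least the intersection of their disks of convergence, and the termwise sum of two such series is again convergent. Hence each entry $(\mathbf{A}\mathbf{B})_{ij}(p)$ is real analytic on $\mathbb{P}$, and therefore so is the matrix-valued product.

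The only bookkeeping to be careful about, and the closest thing to an obstacle, is ensuring that at each expansion point $p_0 \in \mathbb{P}$ the finitely many power series representing the $A_{ik}$ and $B_{kj}$ share a common neighborhood of convergence. This is immediate because we are taking the minimum over a finite collection of positive convergence radii, which is again positive. With this routine step in place the lemma follows directly from the fact that real analytic functions on an open subset of $\mathbb{R}$ form a commutative ring, so no deeper machinery is required.
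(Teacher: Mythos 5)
Your proof is correct and takes essentially the same route as the paper's: both reduce the claim entrywise, write $[\mathbf{A}(p)\mathbf{B}(p)]_{ij} = \sum_{k=1}^{\hat{n}} A_{ik}(p)B_{kj}(p)$ as a finite sum of products of real analytic scalar functions, and invoke the closure of real analyticity under sums and products. Your extra bookkeeping about a common neighborhood of convergence (taking the minimum of finitely many positive radii) merely makes explicit a standard step the paper leaves implicit.
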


\begin{proof}
	Each entry of \( \mathbf{A}(p)\mathbf{B}(p) \) is given by
	\[
	[\mathbf{A}(p)\mathbf{B}(p)]_{i,j} = \sum_{k=1}^{\hat{n}} \mathbf{A}_{i,k}(p) \cdot \mathbf{B}_{k,j}(p),
	\]
	which is a finite sum of products of real analytic functions. Since the sum and product of analytic functions remain analytic, each entry of the result is analytic in \( p \), hence \( \mathbf{A}(p)\mathbf{B}(p) \) is analytic.
\end{proof}

\begin{theorem}[Continuity of Matrix Inversion]
	Let \( \mathbf{A}(p) \in \mathbb{R}^{n \times n} \) be real analytic in \( p \in \mathbb{P} \) and be nonsingular for all \( p \in  \mathbb{P} \). Then the inverse \( \mathbf{A}^{-1}(p) \in \mathbb{R}^{n \times n} \) is also real analytic in \( p \).
\end{theorem}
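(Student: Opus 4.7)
The plan is to express $\mathbf{A}^{-1}(p)$ entrywise via the classical adjugate formula
\[
\mathbf{A}^{-1}(p) \;=\; \frac{1}{\det(\mathbf{A}(p))}\,\mathrm{adj}(\mathbf{A}(p)),
\]
and verify analyticity of each factor separately. This lets us reduce matrix-valued analyticity to standard closure properties of the ring of real analytic scalar functions, so no new machinery beyond the preceding product lemma is required.

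First I would observe that each entry of $\mathrm{adj}(\mathbf{A}(p))$ is a signed $(n-1)\times(n-1)$ minor, i.e., a polynomial in the entries of $\mathbf{A}(p)$. Since each entry of $\mathbf{A}(p)$ is real analytic in $p$, and real analytic functions are closed under finite sums and products (the same fact used in the preceding lemma on matrix products), every cofactor and hence every entry of $\mathrm{adj}(\mathbf{A}(p))$ is real analytic in $p$. The same reasoning applied to the Leibniz expansion $\det(\mathbf{A}(p)) = \sum_{\pi \in S_n} \mathrm{sgn}(\pi)\prod_{i=1}^{n} \mathbf{A}_{i,\pi(i)}(p)$ shows that $\det(\mathbf{A}(p))$ is real analytic in $p$ as well.

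The only mildly delicate step is to handle the scalar factor $1/\det(\mathbf{A}(p))$. Let $f(p) := \det(\mathbf{A}(p))$. The nonsingularity hypothesis gives $f(p) \neq 0$ for every $p \in \mathbb{P}$. Fix an arbitrary $p_0 \in \mathbb{P}$ and expand $f$ in a convergent power series about $p_0$; since $f(p_0) \neq 0$, one can invert this series term-by-term (using the geometric series on a small neighborhood where $|f(p)-f(p_0)|<|f(p_0)|$) to obtain a convergent power series for $1/f$ around $p_0$. Hence $1/f$ is real analytic on a neighborhood of $p_0$, and since $p_0 \in \mathbb{P}$ was arbitrary, $1/\det(\mathbf{A}(p))$ is real analytic on all of $\mathbb{P}$.

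Finally, multiplying the real analytic scalar $1/\det(\mathbf{A}(p))$ with each real analytic entry of $\mathrm{adj}(\mathbf{A}(p))$ yields real analytic entries (again by closure under products), so $\mathbf{A}^{-1}(p)$ is entrywise real analytic on $\mathbb{P}$, which is the claim. I expect the power-series reciprocation step to be the main (and essentially only) nonalgebraic point in the proof; the rest is routine ring-theoretic manipulation of analytic functions.
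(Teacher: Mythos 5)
Your proof is correct and takes essentially the same route as the paper's: both use the adjugate formula \(\mathbf{A}^{-1}(p) = \frac{1}{\det \mathbf{A}(p)}\,\mathrm{adj}(\mathbf{A}(p))\), derive analyticity of \(\det \mathbf{A}(p)\) and of the adjugate entries from closure of real analytic functions under finite sums and products, and invoke the nonvanishing of the determinant to make the reciprocal analytic. Your geometric-series reciprocation argument simply spells out a step the paper asserts without detail, so there is nothing to change.
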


\begin{proof}
	The inverse can be written as
	\[
	\mathbf{A}^{-1}(p) = \frac{1}{\det \mathbf{A}(p)} \cdot \text{adj}(\mathbf{A}(p)).
	\]
	The determinant of \( \mathbf{A}(p) \) is a finite sum of products of its entries and is therefore analytic. Under the assumption that \( \det \mathbf{A}(p) \neq 0 \) for all \( p \in \mathbb{P} \), the reciprocal \( \frac{1}{\det \mathbf{A}(p)} \) is analytic as well. Each entry of the adjugate matrix is a minor determinant, which is again an analytic function of \( p \). Thus, the product is analytic, and so is \( \mathbf{A}^{-1}(p) \).
\end{proof}

\begin{theorem}[Continuity of Matrix SVD {\cite{ASVD}}]
	Let \( \mathbf{A}(p) \in \mathbb{R}^{n_1 \times n_2} \) be real analytic in \( p \in \mathbb{P} \). Then there exist real analytic matrices \( \mathbf{U}(p) \in \mathbb{R}^{n_1 \times n_1} \), \( \mathbf{S}(p) \in \mathbb{R}^{n_1 \times n_2} \), and \( \mathbf{V}(p) \in \mathbb{R}^{n_2 \times n_2} \) such that
	\[
	\mathbf{A}(p) = \mathbf{U}(p) \mathbf{S}(p) \mathbf{V}(p)^{\top}, \quad \mathbf{U}(p)^{\top} \mathbf{U}(p) = \mathbf{I}_{n_1}, \quad \mathbf{V}(p)^{\top} \mathbf{V}(p) = \mathbf{I}_{n_2},
	\]
	where \( \mathbf{S}(p) \) is diagonal for each \( p \).
\end{theorem}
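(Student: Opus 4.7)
The plan is to reduce the existence of an analytic SVD to an analytic eigendecomposition of a symmetric analytic matrix, and then build the left singular vectors from the right ones using $\mathbf{A}(p)$ itself, with careful handling of the locus where singular values vanish.

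First, I would form the Gram matrix $\mathbf{B}(p) := \mathbf{A}(p)^{\top}\mathbf{A}(p) \in \mathbb{R}^{n_2 \times n_2}$. Since the entries of $\mathbf{A}(p)$ are real analytic in $p$, so are the entries of $\mathbf{B}(p)$, and $\mathbf{B}(p)$ is symmetric positive semidefinite for every $p \in \mathbb{P}$. By the classical Rellich theorem on analytic perturbations of symmetric matrices, there exist real analytic eigenvalues $\mu_1(p), \ldots, \mu_{n_2}(p)$ and real analytic orthonormal eigenvectors $\mathbf{v}_1(p), \ldots, \mathbf{v}_{n_2}(p)$ with $\mathbf{B}(p)\mathbf{v}_j(p) = \mu_j(p)\mathbf{v}_j(p)$. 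Collecting these as columns gives an analytic orthogonal matrix $\mathbf{V}(p)$.

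Next I would construct analytic singular values $\sigma_j(p)$ satisfying $\sigma_j(p)^2 = \mu_j(p)$. Because $\mu_j(p) \geq 0$ is real analytic, every zero of $\mu_j$ is a local minimum, so the first nonvanishing derivative has even order; consequently $\mu_j$ admits a real analytic square root (up to a sign chosen per connected component). Allowing $\sigma_j(p)$ to be signed where necessary yields an analytic diagonal matrix $\mathbf{S}(p)$; imposing the usual nonnegativity convention would in general destroy analyticity, which is why the cited reference \cite{ASVD} formulates the result in terms of signed singular values.

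The main obstacle is constructing left singular vectors $\mathbf{u}_j(p)$ analytically across the zero-locus of $\sigma_j$. On the open set where $\sigma_j(p) \neq 0$, I would define $\mathbf{u}_j(p) := \sigma_j(p)^{-1}\mathbf{A}(p)\mathbf{v}_j(p)$, which is analytic and unit-norm since $\|\mathbf{A}(p)\mathbf{v}_j(p)\|^2 = \mathbf{v}_j(p)^{\top}\mathbf{B}(p)\mathbf{v}_j(p) = \sigma_j(p)^2$. To extend $\mathbf{u}_j$ across isolated zeros of $\sigma_j$, I would use the fact that the numerator $\mathbf{A}(p)\mathbf{v}_j(p)$ is analytic and, by the norm identity above, vanishes to exactly the same order as $\sigma_j(p)$, so the analytic division lemma yields an analytic extension. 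When $n_1 > \mathrm{rank}\,\mathbf{A}(p)$ generically, I would complete $\mathbf{U}(p)$ to a full orthogonal matrix either by applying the same Rellich-based argument to $\mathbf{A}(p)\mathbf{A}(p)^{\top}$ or by analytic Gram–Schmidt on an analytic complementary frame. The genuine difficulty throughout is the non-uniqueness of the SVD at eigenvalue crossings and at vanishing singular values; Rellich's theorem together with the analytic-division step is what allows a globally consistent analytic choice to be made, after which the identity $\mathbf{A}(p) = \mathbf{U}(p)\mathbf{S}(p)\mathbf{V}(p)^{\top}$ follows by the usual column-wise verification.
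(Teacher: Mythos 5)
The paper itself contains no proof of this theorem: it is imported from the analytic SVD literature, with the citation \cite{ASVD} standing in for the argument, so there is no in-paper proof to compare against. Your sketch reconstructs essentially the standard existence proof behind that citation, and it is sound in outline: Rellich's one-parameter analytic perturbation theorem applied to a symmetric analytic family (the reference applies it to a symmetric reduction — a common variant uses the embedding $\begin{pmatrix} \mathbf{0} & \mathbf{A}(p) \\ \mathbf{A}(p)^{\top} & \mathbf{0} \end{pmatrix}$, you use the Gram matrix $\mathbf{A}(p)^{\top}\mathbf{A}(p)$), signed analytic square roots of the nonnegative analytic eigenvalues (correct, since each zero has even order; and indeed insisting on $\sigma_j(p)\ge 0$ would destroy analyticity, which is why the theorem only asserts that $\mathbf{S}(p)$ is diagonal), and analytic division to extend $\mathbf{u}_j(p)=\sigma_j(p)^{-1}\mathbf{A}(p)\mathbf{v}_j(p)$ across zeros, which works because $\|\mathbf{A}(p)\mathbf{v}_j(p)\|^2=\sigma_j(p)^2$ forces each component to vanish to at least the order of $\sigma_j$.

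Three points need tightening. First, you verify unit norm of the extended $\mathbf{u}_j$ but not pairwise orthogonality; it follows from $\mathbf{u}_j^{\top}\mathbf{u}_k=\mu_k\,\mathbf{v}_j^{\top}\mathbf{v}_k/(\sigma_j\sigma_k)=0$ off the zero set and analytic continuation, and the degenerate case $\mu_j\equiv 0$ (where $\mathbf{A}(p)\mathbf{v}_j(p)\equiv\mathbf{0}$ and no $\mathbf{u}_j$ is produced at all) must be routed into the completion step rather than the division step. Second, the completion of $\mathbf{U}(p)$ is your only real gap-risk: applying Rellich independently to $\mathbf{A}(p)\mathbf{A}(p)^{\top}$ yields eigenvectors with no guarantee of consistency with the $\mathbf{u}_j$ already built; what is actually needed is the standard lemma that an analytic orthonormal family over an interval extends to an analytic orthonormal basis (triviality of analytic vector bundles over a one-dimensional connected base), after which your analytic Gram--Schmidt remark is justified. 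Third, the whole argument hinges on the parameter being scalar and $\mathbb{P}$ connected: both Rellich's theorem and the global signed square root fail for multidimensional parameter domains, which is consistent with the hypothesis $\mathbb{P}\subseteq\mathbb{R}$ here but deserves explicit mention, particularly since the paper elsewhere advertises extensions of NeuMatC to multi-dimensional parameter domains, where this theorem does not apply as stated.
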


The above results demonstrate that common matrix operations (e.g., matrix product, inversion, and SVD) preserve continuity along the parameter dimension under mild conditions. 

\begin{remark}[Generality of Continuity Properties]
	The continuity results above extend beyond inversion and SVD. Other operations, including eigenvalue decomposition, matrix exponentiation, QR decomposition, and pseudoinverse, also exhibit continuity under standard regularity assumptions. This suggests that the proposed framework is broadly applicable to a wide class of parametric matrix computation problems encountered in engineering and scientific applications.
\end{remark}

\section{Proof of Existence of the Compact Mapping (Theorem~\ref{CMapping})}
\label{appendix:theory}

\textbf{\textit{Proof of Theorem ~\ref{CMapping}.}}
	For each $i$, consider the set of scalar-valued functions
	\[
	\mathcal{F}_i := \{ [\mathbf{G}_i(p)]_{jk} \mid 1 \leq j \leq n_{1i}, \ 1 \leq k \leq n_{2i} \}.
	\]
	By assumption, $\mathcal{F}_i$ spans a $d_i$-dimensional linear subspace. 
	Hence, there exists a basis $\{ \varphi_{i,1}(p), \dots, \varphi_{i,d_i}(p) \}$ such that for each pair $(j,k)$,
	\[
	[\mathbf{G}_i(p)]_{jk} = \sum_{\ell=1}^{d_i} \mathcal{C}_{i,jk\ell} \, \varphi_{i,\ell}(p),
	\]
	where $\mathcal{C}_{i,jk\ell}$ are fixed scalar coefficients. 
	Collecting these coefficients over $(j,k,\ell)$ yields a tensor 
	$\mathcal{C}_i \in \mathbb{R}^{n_{1i} \times n_{2i} \times d_i}$, 
	and by defining 
	$\Phi_i(p) := [\varphi_{i,1}(p), \cdots, \varphi_{i,d_i}(p)]^\top \in \mathbb{R}^{d_i}$ 
	we obtain the compact representation
	\[
	\mathbf{G}_i(p) = \mathcal{C}_i \times_3 \Phi_i(p).
	\]
	Because the basis functions $\varphi_{i,\ell}$ exactly span the subspace, this identity holds for all $p \in \mathbb{P}$, which establishes the existence of the desired mapping $\widetilde{\mathbf{G}}_i(\cdot)$ and completes the proof.

\section{Proof of Lipschitz Continuity  in NeuMatC (Theorem~\ref{thm:lipschitz})}
\label{appendix:proof_lipschitz}

\textbf{\textit{Proof of Theorem ~\ref{thm:lipschitz}.}}
	Given the mapping
	\begin{equation}
		\widetilde{\mathbf{G}}_i(\cdot) = \mathcal{C}_i \times_3 \Phi_{\theta_i}(\cdot),
	\end{equation}
	the Frobenius norm of the difference between the outputs at $p_1$ and $p_2$ is given by
	\begin{equation}
		\begin{aligned}
			\| \widetilde{\mathbf{G}}_i(p_1) - \widetilde{\mathbf{G}}_i(p_2) \|_F
			&= \left\| \mathcal{C}_i \times_3 \bigl(\Phi_{\theta_i}(p_1) - \Phi_{\theta_i}(p_2)\bigr)\right\|_F \\
			&\leq \| \mathcal{C}_i \|_1 \cdot \| \Phi_{\theta_i}(p_1) - \Phi_{\theta_i}(p_2) \|_1.
		\end{aligned}
	\end{equation}
	
	Using the assumption \( \| \mathcal{C}_i \|_1 \leq \kappa \), we have
	\begin{equation}
		\| \widetilde{\mathbf{G}}_i(p_1) - \widetilde{\mathbf{G}}_i(p_2) \|_F 
		\leq \kappa \cdot \| \Phi_{\theta_i}(p_1) - \Phi_{\theta_i}(p_2) \|_1.
	\end{equation}
	
	Now we bound \( \| \Phi_{\theta_i}(p_1) - \Phi_{\theta_i}(p_2) \|_1 \). Let \( \mathbf{h}^{(0)} = p \), and recursively define the MLP layers as
	\begin{equation}
		\mathbf{h}^{(\ell)} = \sigma(\mathbf{W}_\ell \mathbf{h}^{(\ell-1)}), \quad \text{for } \ell = 1, \dots, d-1, \quad \Phi_{\theta_i}(p) = \mathbf{W}_d \mathbf{h}^{(d-1)}.
	\end{equation}
	
	Since each activation is \( L_\sigma \)-Lipschitz and \( \|\mathbf{W}_\ell\|_1 \leq \eta \), we inductively obtain
	\begin{equation}
		\| \Phi_{\theta_i}(p_1) - \Phi_{\theta_i}(p_2) \|_1 \leq (L_\sigma \eta)^d \cdot |p_1 - p_2|.
	\end{equation}
	
	Combining the above results, we conclude:
	\begin{equation}
		\| \widetilde{\mathbf{G}}_i(p_1) -\widetilde{\mathbf{G}}_i(p_2) \|_F 
		\leq \kappa (L_\sigma \eta)^d \cdot |p_1 - p_2|.
	\end{equation}

{\color{black}\section{Theoretically Justification of the Low-Rank Representation of Parametric Inversion}
\label{appendix:lowrank_inversion}
This section provides  theoretical support for the low-rank representation admitted in the mapping $\mathbf A(p)^{-1}: \mathcal{P} \mapsto  \mathbb{R}^{n\times n} $.  Although 
\[\mathbf A(p)^{-1} = \frac{1}{\det(\mathbf A(p))}\operatorname{adj}(\mathbf A(p))
\]is a
 rational function of the entries of \(\mathbf A(p)\), \(\mathbf A(p)^{-1}\) remains  smooth on \(p \in \mathcal{P}\) when $\mathbf A(p)$ stays invertible. Moreover, in many real-world applications (e.g., signal processing and wireless communication), $\mathbf A(p)$ commonly admits a low-rank representation along the parameter dimension. 
This combination of smoothness and low-rankness implies that  $\mathbf A(p)^{-1}: \mathcal{P} \mapsto  \mathbb{R}^{n\times n} $ admits an low-rank representation, which is theoretically justified by the following theorem.

\begin{theorem}
	Let $\mathcal P\subset\mathbb R$ be compact and 
	$\mathbf A:\mathcal P\to\mathbb R^{n\times n}$ be invertible for all $p\in\mathcal P$.
	Assume that
	\[
	\mathbf A(p) = \mathcal C_{\mathbf A}\times_3 \psi(p),\qquad p\in\mathcal P,
	\]
	where $\psi\in C^k(\mathcal P;\mathbb R^{r_{\text{in}}})$ and 
	$\sigma_{\min}(\mathbf A(p))\ge\delta>0$ on $\mathcal P$.
	Then for any $\varepsilon>0$, there exist a tensor 
	$\mathcal C_{\mathbf A^{-1}}\in\mathbb R^{n\times n\times {r_{\text{out}}}}$ 
	and a continuous  function 
	$\Phi:\mathcal P\to\mathbb R^{r_{\text{out}}}$ such that
	\[
	\sup_{p\in\mathcal P}
	\Bigl\|
	\mathbf A(p)^{-1}
	-
	\mathcal C_{\mathbf A^{-1}}\times_3 \Phi(p)
	\Bigr\|
	\le \varepsilon,
	\]
	and the required rank satisfies
	\[
	{r_{\text{out}}} \;\le\; C\,\varepsilon^{-\,{r_{\text{in}}}/k},
	\]
	where $C>0$ is a constant independent of $\varepsilon$.
\end{theorem}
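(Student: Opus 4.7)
The plan is to exploit the fact that \(\mathbf A(p)\) depends on \(p\) only through the finite-dimensional feature \(\psi(p)\in\mathbb R^{r_{\text{in}}}\), so that \(\mathbf A(p)^{-1}=F(\psi(p))\) for a single matrix-valued map \(F\) defined on the compact set \(K:=\psi(\mathcal P)\subset\mathbb R^{r_{\text{in}}}\). If \(F\) is \(C^k\), then each of its \(n^2\) scalar components can be uniformly approximated on \(K\) by polynomials of total degree at most \(N\) with error \(O(N^{-k})\) via classical multivariate Jackson-type estimates. Since the number of monomials in \(r_{\text{in}}\) variables of degree at most \(N\) is \(\binom{N+r_{\text{in}}}{r_{\text{in}}}=O(N^{r_{\text{in}}})\), choosing \(N\sim\varepsilon^{-1/k}\) yields the bound \(r_{\text{out}}\le C\,\varepsilon^{-r_{\text{in}}/k}\).

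\textbf{Step 1: Reduce to a smooth function on a compact set.} Define \(F(y):=(\mathcal C_{\mathbf A}\times_3 y)^{-1}\). The map \(y\mapsto \mathcal C_{\mathbf A}\times_3 y\) is linear, hence real-analytic. Because \(\mathcal P\) is compact and \(\psi\) continuous, \(K=\psi(\mathcal P)\) is compact; by continuity of singular values and the assumption \(\sigma_{\min}(\mathbf A(p))\ge\delta>0\), there exists an open neighborhood \(U\supset K\) in \(\mathbb R^{r_{\text{in}}}\) on which \(\sigma_{\min}(\mathcal C_{\mathbf A}\times_3 y)\ge\delta/2\). Since matrix inversion is real-analytic on the set of invertible matrices (via the adjugate/determinant formula, or equivalently via the Neumann series), the composite \(F\) is \(C^\infty\), hence a fortiori \(C^k\), on \(U\). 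Writing \(\mathbf A(p)^{-1}=F(\psi(p))\) and using \(\psi\in C^k\), the problem reduces to approximating a fixed \(C^k\) matrix-valued function on the compact set \(K\).

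\textbf{Step 2: Multivariate polynomial approximation and assembly.} Let \(\{e_\alpha\}_{|\alpha|\le N}\) be the monomial basis on \(\mathbb R^{r_{\text{in}}}\) of total degree at most \(N\), with cardinality \(r_{\text{out}}(N)=\binom{N+r_{\text{in}}}{r_{\text{in}}}\). By a multivariate Jackson theorem applied entrywise to \(F_{ij}\in C^k(U)\) restricted to the compact \(K\Subset U\), there exist coefficients \(c_{ij,\alpha}\) such that
\[
\sup_{y\in K}\Bigl|F_{ij}(y)-\sum_{|\alpha|\le N}c_{ij,\alpha}e_\alpha(y)\Bigr|\;\le\;C_0\,N^{-k}\,\|F_{ij}\|_{C^k(U)},
\]
with \(C_0\) depending only on \(k,r_{\text{in}},K\). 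Collect the coefficients into \(\mathcal C_{\mathbf A^{-1}}\in\mathbb R^{n\times n\times r_{\text{out}}(N)}\) with \([\mathcal C_{\mathbf A^{-1}}]_{ij\alpha}=c_{ij,\alpha}\), and set \(\Phi(p):=\bigl(e_\alpha(\psi(p))\bigr)_{|\alpha|\le N}\), which is continuous because \(\psi\) is \(C^k\) and each monomial is continuous. Summing the entrywise error bound (absorbing \(n\) into constants) gives
\[
\sup_{p\in\mathcal P}\bigl\|\mathbf A(p)^{-1}-\mathcal C_{\mathbf A^{-1}}\times_3\Phi(p)\bigr\|\;\le\;C_1\,N^{-k}\,\max_{i,j}\|F_{ij}\|_{C^k(U)}.
\]
Setting the right-hand side equal to \(\varepsilon\) gives \(N\le C_2\,\varepsilon^{-1/k}\), and therefore \(r_{\text{out}}\le r_{\text{out}}(N)\le C\,\varepsilon^{-r_{\text{in}}/k}\), which is the claimed bound.

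\textbf{Main obstacles.} The two non-routine points are: (i) the passage from the pointwise bound \(\sigma_{\min}(\mathbf A(p))\ge\delta\) on \(\mathcal P\) to a uniform quantitative bound \(\sigma_{\min}\ge\delta/2\) on an open neighborhood \(U\) of \(K\) in the feature space, which is required so that \(F\) is \(C^k\) on an open set containing \(K\) (not merely on \(K\) itself); and (ii) choosing a form of the multivariate Jackson inequality whose constants depend only on \(k,r_{\text{in}},K\) and on \(\|F\|_{C^k(U)}\), ensuring that the final constant \(C\) in the rank bound is independent of \(\varepsilon\). Both are standard but deserve care; apart from these, the argument proceeds by straightforward composition of smoothness with classical multivariate polynomial approximation.
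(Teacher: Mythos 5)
Your proposal is correct and follows essentially the same route as the paper's proof: both reduce $\mathbf A(p)^{-1}$ to a $C^\infty$ (hence $C^k$) matrix-valued function of the feature vector $\psi(p)$ on the compact image $\psi(\mathcal P)$, apply a multivariate Jackson-type polynomial approximation entrywise, assemble the monomials into $\Phi$ and the coefficients into $\mathcal C_{\mathbf A^{-1}}$, and obtain the rank bound from the monomial count $\binom{N+r_{\text{in}}}{r_{\text{in}}}$ with $N\sim\varepsilon^{-1/k}$. The only cosmetic difference is that you secure smoothness on an open neighborhood $U$ of $\psi(\mathcal P)$ via the uniform bound $\sigma_{\min}\ge\delta/2$, whereas the paper invokes a $C^k$ extension from $\psi(\mathcal P)$ to a rectangle $Q$ before applying Jackson's theorem---the same standard extension step, which you correctly flag as the point requiring care.
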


\begin{proof}
	Write $\psi(p)=(\psi_1(p),\dots,\psi_{r_{\mathrm{in}}}(p))$ and set 
	$\Omega=\psi(\mathcal P)\subset\mathbb R^{r_{\mathrm{in}}}$.
	Define the linear operator
	\[
	\widehat{\mathbf A}(\mathbf x)
	:=\mathcal C_{\mathbf A}\times_3\mathbf x
	=\sum_{j=1}^{r_{\mathrm{in}}} x_j\,\mathcal C_{\mathbf A}(:,:,j),
	\qquad \mathbf x\in\mathbb R^{r_{\mathrm{in}}},
	\]
	so that $\mathbf A(p)=\widehat{\mathbf A}(\psi(p))$ for every $p\in\mathcal P$.
	Since $\sigma_{\min}(\mathbf A(p))\ge\delta$, we have
	\[
	\sigma_{\min}(\widehat{\mathbf A}(\mathbf x))\ge\delta,
	\qquad \mathbf x\in\Omega,
	\]
	hence each $\widehat{\mathbf A}(\mathbf x)$ is invertible.
	
	Because $\widehat{\mathbf A}$ is linear, it is a $C^\infty$ map.  
	Matrix inversion is also $C^\infty$ on the Lie group
	\[
	GL_n=\{\mathbf M\in\mathbb R^{n\times n}:\det\mathbf M\neq0\}.
	\]
	Since $\sigma_{\min}(\widehat{\mathbf A}(\mathbf x))\ge\delta$ for all
	$\mathbf x\in\Omega$, the matrices $\widehat{\mathbf A}(\mathbf x)$ remain in a
	compact subset of $GL_n$, and hence the inverse mapping $ 	\mathbf x\mapsto \widehat{\mathbf A}(\mathbf x)^{-1} $
	is $C^\infty$ on $\{\mathbf x:\sigma_{\min}(\widehat{\mathbf A}(\mathbf x))\ge\delta\}$.
	In particular, this inverse is of class $C^k$ on $\Omega$.
	
	Let $Q\supset\Omega$ be a rectangle (i.e., a Cartesian product of bounded intervals) in $\mathbb R^{r_{\mathrm{in}}}$.
	By classical multivariate Jackson--type approximation (applied entrywise to a
	$C^k$ extension of $\mathbf x\mapsto \widehat{\mathbf A}(\mathbf x)^{-1}$ from
	$\Omega$ to $Q$), for any integer $K\ge1$ there exists a matrix-valued
	polynomial of total degree at most $K$,
	\[
	\mathbf P_K(\mathbf x)
	=
	\sum_{\substack{\alpha\in\mathbb N^{r_{\mathrm{in}}}\\ |\alpha|=\alpha_1+\cdots+\alpha_{r_{\mathrm{in}}}\le K}}
	\mathbf B_\alpha\,\mathbf x^\alpha,
	\qquad 
	\mathbf x^\alpha=x_1^{\alpha_1}\cdots x_{r_{\mathrm{in}}}^{\alpha_{r_{\mathrm{in}}}}.
	\]
	such that
	\[
	\sup_{\mathbf x\in\Omega}
	\bigl\|\widehat{\mathbf A}(\mathbf x)^{-1}-\mathbf P_K(\mathbf x)\bigr\|
	\le C_1 K^{-k},
	\]
	where $C_1>0$ depends only on $k$, $\delta$, $\mathcal C_{\mathbf A}$ and
	$\|\psi\|_{C^k}$.
	(Here $\|\psi\|_{C^k}$ denotes the usual $C^k$--norm of $\psi$ on $\mathcal P$.)
	
	For $p\in\mathcal P$,
	\[
	\mathbf A(p)^{-1}=\widehat{\mathbf A}(\psi(p))^{-1},
	\qquad 
	\mathbf P_K(\psi(p))
	=\sum_{|\alpha|\le K}\mathbf B_\alpha\,\psi(p)^\alpha,
	\]
	and therefore
	\[
	\sup_{p\in\mathcal P}
	\bigl\|\mathbf A(p)^{-1}-\mathbf P_K(\psi(p))\bigr\|
	=
	\sup_{\mathbf x\in\Omega}
	\bigl\|\widehat{\mathbf A}(\mathbf x)^{-1}-\mathbf P_K(\mathbf x)\bigr\|
	\le C_1 K^{-k}.
	\]
	
	Let $\{\alpha^{(1)},\dots,\alpha^{(d)}\}$ be the collection of all multi-indices 
	$\alpha=(\alpha_1,\dots,\alpha_{r_{\mathrm{in}}})\in\mathbb N^{r_{\mathrm{in}}}$ 
	with total degree $ 	|\alpha|:=\alpha_1+\cdots+\alpha_{r_{\mathrm{in}}}\le K, $
	so that $d=\binom{K+r_{\mathrm{in}}}{\,r_{\mathrm{in}}\,}$.   Define
	\[
	\mathcal C_{\mathbf A^{-1}}(:,:,j)=\mathbf B_{\alpha^{(j)}},
	\qquad
	\Phi_j(p)=\psi(p)^{\alpha^{(j)}},
	\]
	where $\psi(p)^{\alpha^{(j)}}=\psi_1(p)^{\alpha^{(j)}_1}\cdots 
	\psi_{r_{\mathrm{in}}}(p)^{\alpha^{(j)}_{r_{\mathrm{in}}}}$.
	Then, by the definition of the mode-$3$ tensor product,
	\[
	\mathcal C_{\mathbf A^{-1}}\times_3\Phi(p)
	=
	\sum_{j=1}^{d}\Phi_j(p)\,\mathcal C_{\mathbf A^{-1}}(:,:,j)
	=
	\mathbf P_K(\psi(p)).
	\]

	Choose 
	\[
	K=\Bigl\lceil(2C_1/\varepsilon)^{1/k}\Bigr\rceil.
	\]
	Then $C_1 K^{-k}\le\varepsilon/2$, and after absorbing constants we obtain
	\[
	\sup_{p\in\mathcal P}
	\bigl\|\mathbf A(p)^{-1}-\mathcal C_{\mathbf A^{-1}}\times_3\Phi(p)\bigr\|
	\le\varepsilon.
	\]
	
	Finally, the rank estimate follows from
	\[
	d=\binom{K+r_{\mathrm{in}}}{r_{\mathrm{in}}}
	\le\frac{(K+r_{\mathrm{in}})^{r_{\mathrm{in}}}}{r_{\mathrm{in}}!}
	\le\frac{(2K)^{r_{\mathrm{in}}}}{r_{\mathrm{in}}!},
	\qquad (K\ge r_{\mathrm{in}}).
	\]
	Since $K\le (2C_1/\varepsilon)^{1/k}+1$, we conclude that
	\[
	d\le C\,\varepsilon^{-\,r_{\mathrm{in}}/k},
	\]
	for a constant $C>0$ depending only on $k$, $r_{\mathrm{in}}$ and $C_1$, and
	independent of $\varepsilon$.  
	Thus $r_{\mathrm{out}}=d$, which completes the proof.
\end{proof}
}

\section{Experimental Settings} \label{app:es}

This section details the environment and configurations used in our experiments.

\textbf{Environment.} All experiments are conducted on a desktop with an NVIDIA RTX 3060 Ti GPU, Intel Core i7-12700KF CPU, and 32GB RAM. Our implementation uses Python and PyTorch 2.5.0 with CUDA 12.1.

\textbf{Sampling Configuration.}  We perform all experiments in this normalized domain  $[0,1]$.
For each experiment, we first generate a  sampling set 
$\mathcal{S}_{\mathrm{all}}=\{p_j\}_{j=1}^{N_s}$ 
by uniformly sampling  $[0,1]$,
where $N_s$ is chosen from $\{20,40,60,80\}$.
The initial collocation set 
$\mathcal{D}_{\text{col}}^{(0)} = \{q_\ell^{(0)}\}_{\ell=1}^{N_c^{(0)}}$
is uniformly sampled from the remaining region 
$[0,1]\setminus\mathcal{S}_{\mathrm{all}}$,
with $N_c^{(0)}$ selected from $\{20, 80,140,200\}$.
During adaptive refinement, the collocation set is updated every $T$ iterations with 
$T \in \{100,500,1000\}$, and at each update 
$N_{\mathrm{add}}$ points with the largest residual scores are added,
where $N_{\mathrm{add}}$ is chosen from $\{5, 10, 15, 20\}$.

\textbf{Experimental Configuration.} We follow the hyperparameter settings recommended in the original papers for all baseline methods.
For NeuMatC, the number of hidden layers $L$ is selected from $\{2, 3, 4\}$ and the hidden layer width $W$ from $\{50, 100, 150, 200\}$.
The activation function for all MLPs is $\sin(\omega \cdot)$, where the frequency parameter $\omega \in \{0.05, 0.10, 0.15, 0.20, 0.25\}$.
The loss-balancing weight $\lambda$ is chosen from $\{0.1, 1, 10\}$, and the latent dimension $d$ from $\{10, 20, 30, 40\}$. {\color{black} For RSVD, the truncation rank of RSVD and CRSVD is set to $ 20 $ in synthetic data experiments and is set to $ 10 $ in real data experiment.}

{\color{black}\section{Low-Rank Structure in the Used Parametric Datasets Used }}\label{appendix:lowrank-visualization}
\begin{figure}[H]
	\centering
	
	\begin{minipage}{0.49\linewidth}
		\centering
		\textbf{\scriptsize Real-World Channel Matrices}\par\vspace{0.1cm}
		\includegraphics[width=\linewidth]{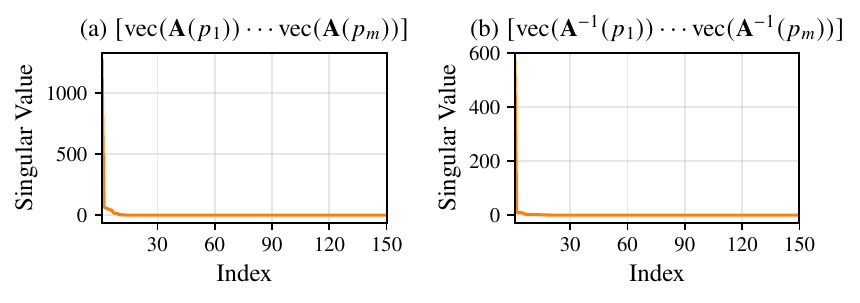}
	\end{minipage}
	\hfill
	\begin{minipage}{0.49\linewidth}
		\centering
		\textbf{\scriptsize Snthetic Parametric Matrices}\par\vspace{0.1cm}
		\includegraphics[width=\linewidth]{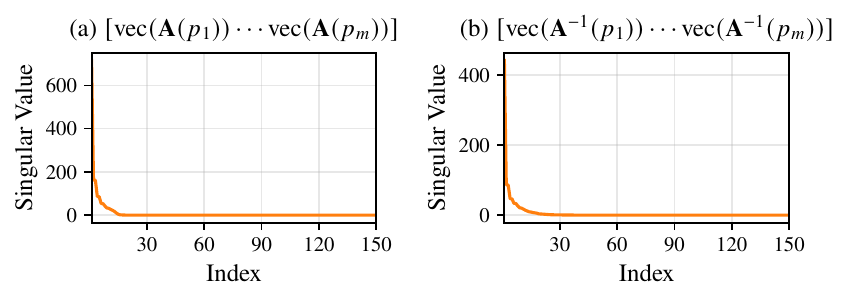}
	\end{minipage}
	
	\caption{{\color{black}
		Singular value decay of matrices constructed by stacking vectorized forms of 
		$\mathrm{vec}(\mathbf{A}(p))$ and $\mathrm{vec}(\mathbf{A}^{-1}(p))$ across 
		$m$ parameter samples.  
		Both real and synthetic datasets exhibit rapid singular value decay, revealing 
		strong low-rank structure along the parameter dimension.}
		}
	\label{fig:sv-decay-inverse}
\end{figure}
\vspace{-0.2cm}  
\begin{figure}[H]
	\centering
	{\bf \scriptsize Real-World Channel Matrices}\vspace{2pt}
	
	\includegraphics[width=0.9\linewidth]{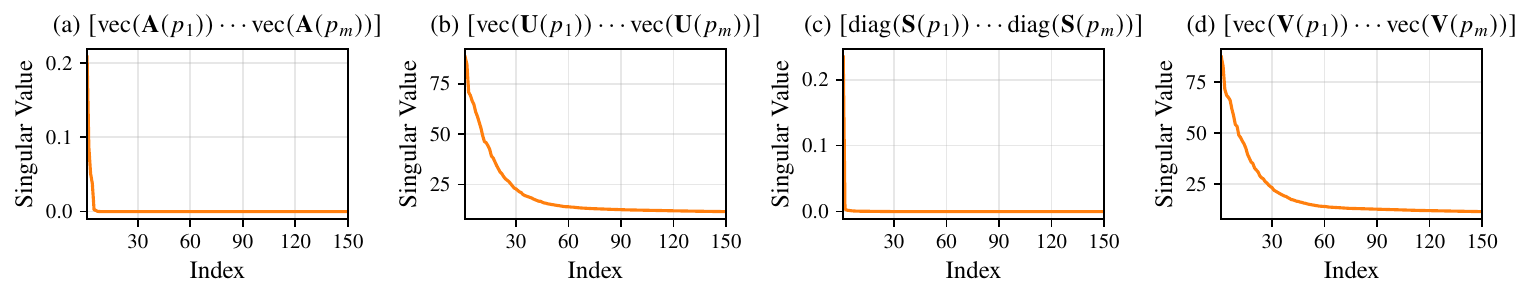}

	{\bf \scriptsize Snthetic Parametric Matrices}\vspace{2pt}
	
	\includegraphics[width=0.9\linewidth]{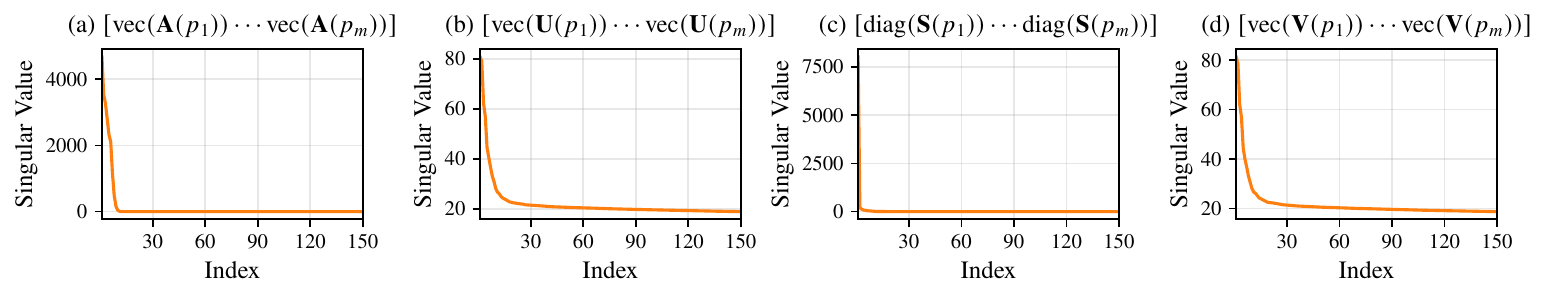}
	
	\caption{{\color{black}
		Singular value decay of matrices constructed by stacking vectorized $\mathrm{vec}(\mathbf{H}(p))$ and vectorized SVD components 
		$\mathrm{vec}(\mathbf{U}(p))$, $\mathrm{diag}(\mathbf{S}(p))$, and
		$\mathrm{vec}(\mathbf{V}(p))$  across 
		$m$ parameter samples.  
		Both real-world and synthetic datasets exhibit rapid singular value decay, revealing 
		strong low-rank structure along the parameter dimension.}
	}
	\label{fig:lowrank-svd}
\end{figure}

{\color{black}\section{Experiments on Solving  Large-scale Parametric Linear Systems Arising from PDE Discretization}
\label{appendix:pde_experiment}
In this part, we focus on solving large-scale parametric sparse linear systems, a fundamental problem in scientific computing and engineering applications involving parameterized PDEs. Such systems arise when discretizing PDEs with varying coefficients or boundary conditions across a parameter domain. They are challenging due to their high dimensionality  and the need for efficient solutions across many parameter values. Traditional iterative methods like Krylov subspace recycling are commonly used but still require considerable computational effort for each instance.
In the following, we present an experiment on a parametric advection–diffusion–reaction equation to demonstrate the applicability of NeuMatC to this class of problems, and compare its performance with a classical Krylov recycling solver \cite{parks2006recycling}.

\paragraph{Advection-Diffusion-Reaction Equation.} 
We consider a one-parameter dependent advection–diffusion–reaction equation defined on the square domain $\Omega = [0, 1]^2$ with homogeneous Dirichlet boundary conditions. The governing equation is
\begin{equation}
	-\Delta u + \mathbf{v}(p) \cdot \nabla u + u = f,
\end{equation}
where $z \in [0,1]$ is the parameter controlling the advection direction. The advection field is given by
\[
\mathbf{v}(p) = D \cos(2\pi p)\,\mathbf{e}_1 + D \sin(2\pi p)\,\mathbf{e}_2,
\]
with $D = 50$ and $\{\mathbf{e}_1, \mathbf{e}_2\}$ denoting the canonical basis vectors of $\mathbb{R}^2$.

A second-order finite difference discretization on a uniform mesh is employed, resulting in a system of equations of the form
\[
\mathbf{A}(p)\,\mathbf{u}(p) = \mathbf{b},
\]
where the system matrix $\mathbf{A}(p)$ depends continuously on $p$ and is assembled as
\begin{equation}
	\mathbf{A}(p)
	= \mathbf{A}_0
	+ \cos(2\pi p)\,\mathbf{A}_1
	+ \sin(2\pi p)\,\mathbf{A}_2 .
\end{equation}
Here, $\mathbf{A}_0$, $\mathbf{A}_1$, and $\mathbf{A}_2$ are parameter-independent sparse matrices representing the discrete diffusion, advection, and reaction operators, respectively.  
The right-hand side vector $\mathbf{b}$ is independent of $z$ and corresponds to the discretized source term $f$.

\paragraph{Experimental Setup}
We consider solving a collection of $200$ large-scale parametric sparse linear systems of size $10^5 \times 10^5$, generated from the advection--diffusion--reaction model described above. We compare NeuMatC with a Krylov subspace recycling method, which is widely used for efficiently solving sequences of related linear systems. We employ GCRO-DR \cite{parks2006recycling} as a representative method within this class, using an ILU(0) preconditioner and initializing each solve with the previous solution along the parameter dimension.  All methods are implemented in Python and run on the same hardware platform. RelErr is measured as 
$\|\mathbf{A}(p)\mathbf{x}(p)-\mathbf{b}(p)\|_2 / \|\mathbf{b}(p)\|_2$. 

\paragraph{Results}
Table~\ref{tab:pde_comparison} presents the runtime and accuracy comparisons between NeuMatC and the Krylov recycling baseline on solving 200 large-scale parametric sparse linear systems. 
NeuMatC demonstrates substantial inference speedup, reducing the total time from $5.19 \times 10^5$ ms to only $8.01$ ms while achieving comparable accuracy. Even when accounting for training cost, NeuMatC achieves an overall $3.96 \times$ speedup. These results highlight the strong potential of NeuMatC in large-scale sparse system problems. 

		\begin{table}[htbp]
	\centering
	\scriptsize
	\caption{	\textcolor{black}{Comparison between Krylov subspace recycling and NeuMatC on parametric sparse 
			linear systems arising from advection–diffusion–reaction PDEs. Each system 
			has size $10^5 \times 10^5$, and inference times correspond to solving 200 systems.
	}}
	\vspace{0.2cm}
	\label{tab:pde_comparison}
	\textcolor{black}{\begin{tabular}{cccccc}
			\toprule
			\multirow{2}{*}{Method} 
			& \multicolumn{2}{c}{CPU Time (ms)} 
			& \multirow{2}{*}{RelErr} 
			& \multirow{2}{*}{Speedup} \\
			\cmidrule(lr){2-3}
			& Training & Inference & & \\ 
			\midrule
			Krylov Recycling & --- & $5.19\times 10^{5}$ & $1.12 \times 10^{-3}$ & $1.00\times$ \\
			NeuMatC          & $1.31\times 10^{5}$ & $8.01 \times 10^{0}$ & $1.11 \times 10^{-3}$ & $3.96\times$ \\
			\bottomrule
	\end{tabular}}
\end{table}}

{\color{black}
\section{More Results ON Numerical Experiments} \label{alg:exp}
 \paragraph{Results on Synthetic Datasets with Controlled Parametric Rank.} To better validate the low-rank structure exploited by NeuMatC, we construct synthetic datasets where the low-rankness along the parameter dimension is explicitly controlled. Each matrix $\mathbf{H}(p)$ is constructed by simulating a parameter-dependent low-rank SVD:
 \[
 \mathbf{H}(p) = \mathbf{U}(p)\, \mathbf{S}(p)\, \mathbf{V}(p)^\top,
 \]
 where each factor (e.g., $\mathbf{U}(p)$) is represented as a mode-3 tensor product $\mathbf{U}(p) = \mathcal{C}_U \times_3 \boldsymbol{\phi}(p)$, 
 with $\boldsymbol{\phi}(p) \in \mathbb{R}^d$ denoting smooth radial basis functions and $\mathcal{C}_U \in \mathbb{R}^{m \times r \times d}$ the coefficient tensor. The dimensions $d$ of these basis expansions determine the  parametric rank. Similarly, the singular value matrix $\mathbf{S}(p)$ is constrained to be diagonal and constructed as $ \mathbf{S}(p) = \mathcal{C}_S \times_3 \boldsymbol{\phi}(p).$

We generate $200$ parameterized matrices of size $512 \times 512$ for each test, varying the parametric rank $d \in  \{5, 10, 15\}$ while fixing the SVD truncation rank $r = 100$. For NeuMatC, we uniformly sample \(p\) from \([0,1]\) at 20 parameter values for training and 100 for testing.
 Table \ref{tab:controlled_param_rank} reports the performance of NeuMatC compared to randomized SVD across different parametric ranks. Compared to randomized SVD, NeuMatC consistently achieves both higher accuracy and faster inference, as randomized SVD relies on the stronger assumption that each individual matrix is low-rank.
	\begin{table}[h]
	\centering
	\scriptsize
	\caption{	{\color{black}Comparison of NeuMatC and RSVD on synthetic $512\times512$ parametric matrices with controlled rank along the parameter dimension. The truncation rank of RSVD is set to $ 85 $.}}
	\label{tab:controlled_param_rank}
	\vspace{0.1cm}
	{\color{black}\begin{tabular}{cccccc}
			\toprule
			\multirow{2}{*}{Parametric Rank} &
			\multicolumn{2}{c}{NeuMatC} &
			\multicolumn{2}{c}{RSVD} \\
			\cmidrule(lr){2-3} \cmidrule(lr){4-5}
			& CPU Time (ms) & RelErr & CPU Time (ms) & RelErr \\
			\midrule
			10   & $1.45 \times 10^{0}$ & $3.90\times 10^{-3}$ & $23.572 \times 10^{1}$ & $1.23\times 10^{-2}$ \\
			20   & $3.81 \times 10^{0}$ & $2.70\times 10^{-3}$ & $24.074 \times 10^{1}$ & $1.15\times 10^{-2}$ \\
			30   & $3.77 \times 10^{0}$ & $4.61\times 10^{-3}$ & $23.902 \times 10^{1}$ & $1.32\times 10^{-2}$ \\
			\bottomrule
	\end{tabular}}
\end{table}}

\textcolor{black}{\paragraph{Results on Batched GPU Computation} While GPU acceleration is evident in single-matrix inference, the benefits become even more pronounced in batched scenarios.
	In many real-world applications, such as wireless communication and signal processing, it is common to perform matrix computations at hundreds or even thousands of parameter values simultaneously.
	These settings naturally form a batch of matrices, making parallel execution particularly important. To verify this advantage, we compare NeuMatC with cuSOLVER, a widely used batched high performance library on GPU, on batched inversion and SVD tasks.
	As shown in Table~\ref{tab:neumatc_vs_cusolver}, NeuMatC consistently achieves significant speedups over cuSOLVER across various matrix and batch sizes, demonstrating its strong efficiency in batched GPU computation.
}

\begin{table}[h!]
	\centering
	\scriptsize
	\textcolor{black}{\caption{\textcolor{black}{Runtime comparison of NeuMatC and cuSOLVER for batched matrix inversion and SVD on GPU.}}
		\renewcommand{\arraystretch}{1.15}
		\setlength{\tabcolsep}{4pt}
		\begin{tabular}{cccccccc}
			\toprule
			\multirow{2}{*}{\textbf{Matrix Size}} & \multirow{2}{*}{\textbf{Batch Size}}
			& \multicolumn{3}{c}{\textbf{Matrix Inversion}}
			& \multicolumn{3}{c}{\textbf{SVD}} \\
			\cmidrule(lr){3-5} \cmidrule(lr){6-8}
			& & cuSOLVER (ms) & NeuMatC (ms) & Speedup
			& cuSOLVER (ms) & NeuMatC (ms) & Speedup \\
			\midrule
			\multirow{3}{*}{$32 \times 32$}
			& 128 & $2.7 \times 10^{-1}$ & $2.3 \times 10^{-1}$ & $1.17\times$ & $9.3 \times 10^{-1}$ & $5.3 \times 10^{-1}$ & $1.77\times$ \\
			& 256 & $2.9 \times 10^{-1}$ & $2.0 \times 10^{-1}$ & $1.49\times$ & $1.29 \times 10^{0}$ & $5.5 \times 10^{-1}$ & $2.35\times$ \\
			& 512 & $4.2 \times 10^{-1}$ & $2.6 \times 10^{-1}$ & $1.62\times$ & $2.33 \times 10^{0}$ & $5.9 \times 10^{-1}$ & $3.98\times$ \\
			\midrule
			\multirow{3}{*}{$512 \times 512$}
			& 128 & $2.47 \times 10^{1}$ & $7.8 \times 10^{-1}$ & $31.71\times$ & $3.31 \times 10^{3}$ & $1.44 \times 10^{0}$ & $2295.04\times$ \\
			& 256 & $5.48 \times 10^{1}$ & $1.22 \times 10^{0}$ & $44.87\times$ & $6.64 \times 10^{3}$ & $2.80 \times 10^{0}$ & $2374.41\times$ \\
			& 512 & $1.15 \times 10^{2}$ & $2.28 \times 10^{0}$ & $50.36\times$ & $1.35 \times 10^{4}$ & $4.69 \times 10^{0}$ & $2882.23\times$ \\
			\bottomrule
		\end{tabular}
		\label{tab:neumatc_vs_cusolver}}

\end{table}

\section{NeuMatC Training Procedure}\label{alg:NeuMatC}
The full NeuMatC procedure is outlined in Algorithm~\ref{alg:fi_NeuMatC}.

\begin{algorithm}[t]   
	\caption{NeuMatC}
	\label{alg:fi_NeuMatC}
	\begin{algorithmic}[1]
		\Require Initial collocation set $\mathcal{D}_{\text{col}}^{(0)} = \{q_\ell^{(0)}\}_{\ell=1}^{N_c^{(0)}}$; supervised data $\mathcal{D}_{\text{data}} = \{(p_j, \mathbf{G}(p_j))\}_{j=1}^{N_s}$; maximum iterations $K_{\max}$; residual threshold $\epsilon_r$; failure probability tolerance $\epsilon_p$; update interval $T$; number of new collocation points $N_{\text{add}}$.
		\State Initialize parameters $\{\theta_i^{(0)}, \mathcal{C}_i^{(0)}\}_{i=1}^m$ with random values; set $k \leftarrow 0$
		\While{$k < K_{\max}$}
		\State Optimize $\{\theta_i, \mathcal{C}_i\}_{i=1}^m$ via Adam to minimize:
		\[
		\begin{aligned}
			\min_{\{\mathcal{C}_i,\theta_i\}_{i=1}^m}\;\; 
			\mathcal{L}(\{\mathcal{C}_i,\theta_i\};\mathcal{D}_{\text{data}},\mathcal{D}_{\text{col}}^{(k)}) 
			&= \sum_{i=1}^m \sum_{j=1}^{N_s}
			\left\|
			\mathcal{C}_i \times_3 \Phi_{\theta_i}(p_j) 
			- \mathbf{G}_i(p_j)
			\right\|_F^2 \\
			&\quad + 
			\lambda 
			\sum_{q \in \mathcal{D}_{\text{col}}^{(k)}}
			\left\|
			\mathscr{R}\!\left(
			q;\; \{\mathcal{C}_i \times_3 \Phi_{\theta_i}(q)\}_{i=1}^m
			\right)
			\right\|_F^2 .
		\end{aligned}
		\]
		\State Let the mapping $	\widehat{\mathbf{G}}^{(k)}(\cdot) := \{\widehat{\mathbf{G}}_i^{(k)}(\cdot)\}_{i=1}^m $, where each component is given by
		\[
		\widehat{\mathbf{G}}_i^{(k)}(\cdot) := \mathcal{C}_i^{(k)} \times_3 \Phi_{\theta_i^{(k)}}(\cdot).
		\]
		\If{$k \bmod T = 0$}
		\State Compute residual score: \
		$g^{(k)}(p)=\bigl\|\mathscr{R}\!\left(p;\widehat{\mathbf{G}}^{(k)}(p)\right)\bigr\|_F^2 - \epsilon_r$
		\State Estimate failure probability over candidate set $\mathcal{S}_{\text{cand}}$:
		\[
		\hat{P}_{\mathcal{F}}^{(k)} = \frac{1}{|\mathcal{S}_{\text{cand}}|}
		\sum_{p \in \mathcal{S}_{\text{cand}}} \mathbb{I}\!\left[g^{(k)}(p)>0\right]
		\]
		\If{$\hat{P}_{\mathcal{F}}^{(k)} < \epsilon_p$} \textbf{break}
		\EndIf
		\State Failure region: $\Omega_F^{(k)}=\{p\in\mathcal{S}_{\text{cand}} \mid g^{(k)}(p)>0\}$
		\State Select $N_{\text{add}}$ points with largest $g^{(k)}(p)$: $\mathcal{P}_{\text{add}}^{(k)} \subset \Omega_F^{(k)}$
		\State Update collocation set: $\mathcal{D}_{\text{col}}^{(k+1)}=\mathcal{D}_{\text{col}}^{(k)} \cup \mathcal{P}_{\text{add}}^{(k)}$
		\Else
		\State $\mathcal{D}_{\text{col}}^{(k+1)}=\mathcal{D}_{\text{col}}^{(k)}$
		\EndIf
		\State $k \leftarrow k+1$
		\EndWhile
		\State \Return trained mapping parameters $\{\theta_i^{(k)}, \mathcal{C}_i^{(k)}\}_{i=1}^m$
	\end{algorithmic}
\end{algorithm}

\section{ Large Language Model Usage Disclosure} 

Large language models (LLMs) were used solely to assist in improving the clarity of writing and identifying related literature. No part of the research ideation, methodology design, experimental setup, or analysis was performed by LLMs. All technical contributions were conceived and executed exclusively by the listed authors.

\end{document}